\newtheorem{lemma}{Lemma}
\newtheorem{theorem}{Theorem}
\renewcommand{\d}[1]{\operatorname{d}\!{#1}}
\newcommand{\N}[1]{\mathcal{N}\!\left({#1}\right)}
\pgfplotsset{compat=1.16}
\definecolor{beige}{RGB}{245, 245, 220}
\definecolor{darkgrey}{RGB}{65, 65, 65}
\definecolor{lightgrey}{RGB}{250, 250, 250}
\newcommand*\smallcircled[1]{\tikz[baseline=(char.base)]{
            \node[shape=circle,draw,inner sep=0pt, text width=2mm, align=center, minimum width=2.5mm] (char) {\scriptsize{#1}};}}
\newcommand*\smalldarkcircled[1]{\tikz[baseline=(char.base)]{
            \node[shape=circle,draw,inner sep=0pt, text width=2mm, align=center, fill=darkgrey, text=white, font=\bfseries, minimum width=2.5mm] (char) {\scriptsize{#1}};}}
\tikzstyle{dash} = [dashed, -latex,>=latex]
\tikzstyle{line} = [draw, -latex,>=latex]
\tikzstyle{box} = [draw, minimum size=.8cm]
\tikzstyle{smallbox} = [draw, minimum size=5mm]
\tikzstyle{mediumbox} = [draw, minimum size=6mm]
\tikzstyle{roundbox} = [draw, circle, inner sep=0pt, minimum size=.8cm]
\tikzstyle{clamped} = [draw, fill=black, minimum size=0.15cm]
\tikzstyle{msgcircle} = [shape=circle, draw, inner sep=0pt, minimum size=5mm, fill=white]
\tikzstyle{darkmsgcircle} = [shape=circle, draw, inner sep=0pt, minimum size=5mm, fill=darkgrey, text=white, font=\bfseries]
\newcommand{\msg}[6]{
      \ifthenelse{\isin{#1}{left} \AND \isin{#2}{down}}{
            \coordinate (anchor) at ($({#3})!{#5}!({#4})$);
            \node[msgcircle, xshift=-5.5mm] at (anchor) {#6};
            \node[xshift=-1.5mm] at (anchor) {$\downarrow$};
      }{}
      \ifthenelse{\isin{#1}{right} \AND \isin{#2}{down}}{
            \coordinate (anchor) at ($({#3})!{#5}!({#4})$);
            \node[msgcircle, xshift=5.5mm] at (anchor) {#6};
            \node[xshift=1.5mm] at (anchor) {$\downarrow$};
      }{}

      \ifthenelse{\isin{#1}{down} \AND \isin{#2}{right}}{
            \coordinate (anchor) at ($({#3})!{#5}!({#4})$);
            \node[msgcircle, yshift=-6.0mm] at (anchor) {#6};
            \node[yshift=-2.0mm] at (anchor) {$\rightarrow$};
      }{}
      \ifthenelse{\isin{#1}{up} \AND \isin{#2}{right}}{
            \coordinate (anchor) at ($({#3})!{#5}!({#4})$);
            \node[msgcircle, yshift=6.0mm] at (anchor) {#6};
            \node[yshift=2.0mm] at (anchor) {$\rightarrow$};
      }{}

      \ifthenelse{\isin{#1}{down} \AND \isin{#2}{left}}{
            \coordinate (anchor) at ($({#3})!{#5}!({#4})$);
            \node[msgcircle, yshift=-6.0mm] at (anchor) {#6};
            \node[yshift=-2.0mm] at (anchor) {$\leftarrow$};
      }{}
      \ifthenelse{\isin{#1}{up} \AND \isin{#2}{left}}{
            \coordinate (anchor) at ($({#3})!{#5}!({#4})$);
            \node[msgcircle, yshift=6.0mm] at (anchor) {#6};
            \node[yshift=2.0mm] at (anchor) {$\leftarrow$};
      }{}

      \ifthenelse{\isin{#1}{left} \AND \isin{#2}{up}}{
            \coordinate (anchor) at ($({#3})!{#5}!({#4})$);
            \node[msgcircle, xshift=-5.5mm] at (anchor) {#6};
            \node[xshift=-1.5mm] at (anchor) {$\uparrow$};
      }{}
      \ifthenelse{\isin{#1}{right} \AND \isin{#2}{up}}{
            \coordinate (anchor) at ($({#3})!{#5}!({#4})$);
            \node[msgcircle, xshift=5.5mm] at (anchor) {#6};
            \node[xshift=1.5mm] at (anchor) {$\uparrow$};
      }{}
}
\newcommand{\darkmsg}[6]{
      \ifthenelse{\isin{#1}{left} \AND \isin{#2}{down}}{
            \coordinate (anchor) at ($({#3})!{#5}!({#4})$);
            \node[darkmsgcircle, xshift=-5.5mm] at (anchor) {#6};
            \node[xshift=-1.5mm] at (anchor) {$\downarrow$};
      }{}
      \ifthenelse{\isin{#1}{right} \AND \isin{#2}{down}}{
            \coordinate (anchor) at ($({#3})!{#5}!({#4})$);
            \node[darkmsgcircle, xshift=5.5mm] at (anchor) {#6};
            \node[xshift=1.5mm] at (anchor) {$\downarrow$};
      }{}

      \ifthenelse{\isin{#1}{down} \AND \isin{#2}{right}}{
            \coordinate (anchor) at ($({#3})!{#5}!({#4})$);
            \node[darkmsgcircle, yshift=-6.0mm] at (anchor) {#6};
            \node[yshift=-2.0mm] at (anchor) {$\rightarrow$};
      }{}
      \ifthenelse{\isin{#1}{up} \AND \isin{#2}{right}}{
            \coordinate (anchor) at ($({#3})!{#5}!({#4})$);
            \node[darkmsgcircle, yshift=6.0mm] at (anchor) {#6};
            \node[yshift=2.0mm] at (anchor) {$\rightarrow$};
      }{}

      \ifthenelse{\isin{#1}{down} \AND \isin{#2}{left}}{
            \coordinate (anchor) at ($({#3})!{#5}!({#4})$);
            \node[darkmsgcircle, yshift=-6.0mm] at (anchor) {#6};
            \node[yshift=-2.0mm] at (anchor) {$\leftarrow$};
      }{}
      \ifthenelse{\isin{#1}{up} \AND \isin{#2}{left}}{
            \coordinate (anchor) at ($({#3})!{#5}!({#4})$);
            \node[darkmsgcircle, yshift=6.0mm] at (anchor) {#6};
            \node[yshift=2.0mm] at (anchor) {$\leftarrow$};
      }{}

      \ifthenelse{\isin{#1}{left} \AND \isin{#2}{up}}{
            \coordinate (anchor) at ($({#3})!{#5}!({#4})$);
            \node[darkmsgcircle, xshift=-5.5mm] at (anchor) {#6};
            \node[xshift=-1.5mm] at (anchor) {$\uparrow$};
      }{}
      \ifthenelse{\isin{#1}{right} \AND \isin{#2}{up}}{
            \coordinate (anchor) at ($({#3})!{#5}!({#4})$);
            \node[darkmsgcircle, xshift=5.5mm] at (anchor) {#6};
            \node[xshift=1.5mm] at (anchor) {$\uparrow$};
      }{}
}
\title{Chance-Constrained Active Inference}
\author[$*$]{Thijs van de Laar}
\author[$*$]{\.{I}smail \c{S}en\"{o}z}
\author[$\dagger$]{Ay\c ca \"Oz\c celikkale}
\author[$\ddagger$]{Henk Wymeersch}
\affil[$*$]{Dept. of Electrical Engineering, Eindhoven University of Technology, The Netherlands}
\affil[$\dagger$]{Dept. of Electrical Engineering, Uppsala University, Sweden}
\affil[$\ddagger$]{Dept. of Electrical Engineering, Chalmers University of Technology, Sweden}
\begin{document}

\maketitle

\begin{abstract}
Active Inference (ActInf) is an emerging theory that explains perception and action in biological agents, in terms of minimizing a free energy bound on Bayesian surprise. Goal-directed behavior is elicited by introducing
prior beliefs on the underlying generative model. In contrast to prior beliefs, which constrain all realizations of a random variable, we propose an alternative approach through chance constraints, which allow for a (typically small) probability of constraint violation, and demonstrate how such constraints can be used as intrinsic drivers for goal-directed behavior in ActInf.
We illustrate how chance-constrained ActInf weights all imposed (prior) constraints on the generative model, allowing e.g., for a trade-off between robust control and empirical chance constraint violation. Secondly, we interpret the proposed solution within a message passing framework. Interestingly, the message passing interpretation is not only relevant to the context of ActInf, but also provides a general purpose approach that can account for chance constraints on graphical models. The chance constraint message updates can then be readily combined with other pre-derived message update rules, without the need for custom derivations. The proposed chance-constrained message passing framework thus accelerates the search for workable models in general, and can be used to complement message-passing formulations on generative neural models.
\end{abstract}

\textbf{Index terms} --- \emph{Active Inference, Message Passing, Chance Constraints, Variational Bayes}

\vfill
This is the author's final version of the manuscript, as accepted for publication in MIT Neural Computation.

\clearpage

\section{Introduction}
\label{sec:introduction}
Similar to biological agents, learning to make decisions based on observations and feedback from the environment is also an essential task for autonomous artificial agents. Traditionally, adaptive linear control and model predictive control have been successfully applied in this area \citep{borrelli2017predictive}. Over the past few years, reinforcement learning has become the predominant approach \citep{recht2019tour}. An emerging alternative perspective to decision making under uncertainty is \emph{active inference} (ActInf) \citep{friston_free-energy_2010}. ActInf is a neuroscience-based theory that has been used extensively to explain behavior of biological agents in dynamic environments \citep{friston_free-energy_2010}. 

ActInf is based in the \emph{free energy principle} (FEP), and postulates that perception and action in biological agents minimize a free energy bound on Bayesian surprise. The free energy is an information-theoretic measure that bounds the current and the future expected statistical surprise, i.e., how unpredictable are the observations under a given generative model (GM). The free energy is associated with the Kullback-Leibler (KL) divergence (i.e., the distance) between the approximate and the true posterior. In particular, according to the free energy principle, the agent acts in such a way as to minimize a free-energy bound on the surprise, i.e., Bayesian surprise which, informally speaking, provides a quantification of the difference between the agent's predictions about the system behavior and the observed system behavior. Minimization of free energy is closely related to variational Bayesian methods, reinforcement learning \citep{sallans_using_2001,tschantz2020reinforcement,sajid2021active}, and deep generative models \citep{ueltzhoffer_deep_2018,fountas2020deep}, another set of popular machine learning approaches \citep{goodfellow2014generative}. ActInf is closely related to message passing on graphical models \citep{de2017factor,friston2017graphical}, and several widely used message passing algorithms, including (loopy) belief propagation, variational message passing and expectation propagation can be derived as fixed-point equations of the (Bethe) free energy \citep{heskes_stable_2003,yedidia_constructing_2005,dauwels_variational_2007,zhang2017unifying}. This relation has been harnessed to develop elegant automated methods for ActInf \citep{schwobel_active_2018,van_de_laar_simulating_2019}.

In addition to investigation of motivating connections with the behavior of the biological systems \citep{friston_free_2006,ramstead_answering_2018}, ActInf has been successfully utilized in applications in the traditional stochastic control scenarios, such as linear quadratic Gaussian (LQG) control and similar standard problems such as maze problems \citep{hoffmann_linear_2017,ueltzhoffer_deep_2018,schwobel_active_2018,baltieri_active_2019,millidge2020relationship,imohiosen2020active}, and exploration-exploitation balancing in multi-armed bandit problems \citep{markovic2021empirical}. 

Despite these promising developments, the ActInf framework lacks certain desirable features present in model predictive control. In particular, there is no off-the-shelf standard ActInf formulation that allows inclusion of chance constraints in the problem setting. Chance constraints provide an attractive approach for on-line decision making for uncertain systems \citep{Mesbah_2016}, i.e., systems where the dynamics are not fully known or the system contains certain components that are best modeled in a stochastic manner. In such settings, constraints on the system behavior, such as the agent remaining in a given region of the environment with a given probability, cannot directly be encoded in terms of prior beliefs. In contrast to approaches that constrain all realizations of the random variables, chance constraints allow for a (typically small) probability of constraint violation, which can significantly improve performance since chance constraints enable the decision maker trade performance with probability of constraint violation \citep{BlackmoreOnoWilliams_2011}. 

This paper proposes a computationally tractable approach to chance-constrained decision making, and applies it to an ActInf context. We include chance constraints in the ActInf objective (i.e., the free energy) by using the Lagrangian formalism. We then solve the Lagrangian optimization problem by variational calculus. Finally, we show that the proposed solution not only leads to a modular and scalable message passing framework for ActInf problems, but also provides a general purpose message passing framework that can account for chance constraints on graphical models in general. 
We claim the following main contributions:
\begin{enumerate}
\item We show that the analytic solution to the chance-constrained free energy problem yields posterior beliefs in the form of truncated mixtures. (Theorem \ref{thm:corrected_belief})
\item We show how this solution can be interpreted in terms of message passing on a factor-graph representation of the generative model. (Theorem \ref{thm:message_passing})
\item Consequently, our results provide a message passing framework that is specifically designed to account for chance constraints.
\end{enumerate}

Message passing is inherently modular, and (variational) message update rules can be pre-derived and stored in a lookup table for later use \citep{korl2005factor,van2019automated}. The chance-constrained message updates can then be readily combined with these pre-derived rules, without the need for laborious derivations. Our results illustrate that the proposed framework can successfully find solutions so that the rate of constraint violation specified in the original problem and the one that is actually observed during the closed-loop operation are close. The results also illustrate how to balance the constraints on the actions and the states through the usage of a tuning parameter, which enables exploration of different trade-offs between immediate and delayed intervention.

\section{Problem Statement}
We start by defining a general factorized generative model $f$ with respect to an (arbitrary) collection of variables $x$. As a notational convention, individual variables are indexed by $i, j \in \mathcal{V}$, and factors by $a, b, c \in \mathcal{F}$, unless stated otherwise. The model then factorizes as
\begin{align}
    f(x) = \prod_{a\in\mathcal{F}} f_a(x_a)\,, \label{eq:model}
\end{align}
with non-negative real functions $f_a$, and where $x_a\subset x$ collects the arguments of $f_a$. In a probabilistic generative model, the individual factors usually represent conditional probability distributions. Probabilistic inference is then concerned with obtaining an (approximate) posterior belief $q_j(x_j) \propto \int f(x) \mathrm{d}x_{\setminus j}$ over a variable of interest $x_j$, where $x_{\setminus j}$ indicates the integration over all model variables except $x_j$.

We now briefly recap how the computation of these beliefs can be performed efficiently and automated over a factor graph \citep{loeliger2004signal}, and how this process can be interpreted as a Bethe free energy minimization problem \citep{yedidia_constructing_2005}. With these concepts firmly in place, we move to chance constraints and the formal problem statement in Sec.~\ref{sec:problem}.

\subsection{Factor Graphs for Marginal Belief Computation}
A factor graph can be used to visually represent a factorized function. In this paper we use the bi-partite factor graph representation. A bi-partite factor graph
\begin{align*}
    \mathcal{G} = (\mathcal{F}, \mathcal{V}, \mathcal{E})\,,
\end{align*}
consists of variable-nodes $\mathcal{V}$, factor-nodes $\mathcal{F}$, and edges $\mathcal{E}$ that connect variable-nodes with factor-nodes. A variable-node $i \in \mathcal{V}$ is connected to a factor-node $a\in \mathcal{F}$ by an edge $(i, a) \in \mathcal{E}$ if (and only if) the variable $x_i$ is an argument of the factor-function $f_a$. An example section of a graph is drawn in Fig.\ref{fig:graph_uncorrected}, where the circle and square represent a variable- and factor-node respectively.
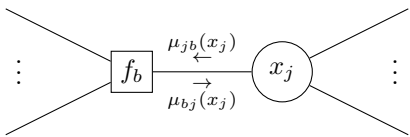
\begin{figure}[h]
    \hfill
    \begin{center}
    \begin{tikzpicture}
        [node distance=20mm,auto,>=stealth']


        \node[smallbox] (f_b) {$f_b$};
        \node[roundbox, right of=f_b] (x_j) {$x_j$};

        \node (x_i_1) at ($(f_b)+(-1.8,0.9)$) {};
        \node (x_i_2) at ($(f_b)+(-1.8,-0.9)$) {};
        \node[left of=f_b, node distance=1.5cm, yshift=1mm] () {$\vdots$};

        \node (x_k_1) at ($(x_j)+(1.8,0.9)$) {};
        \node (x_k_2) at ($(x_j)+(1.8,-0.9)$) {};
        \node[right of=x_j, node distance=1.5cm, yshift=1mm] () {$\vdots$};

        \path[line] (x_i_1) edge[-] (f_b);
        \path[line] (x_i_2) edge[-] (f_b);
        \path[line] (f_b) edge[-] node[anchor=north]{$\substack{\rightarrow\\\mu_{bj}(x_j)}$} node[anchor=south]{$\substack{\mu_{jb}(x_j)\\ \leftarrow}$} (x_j);

        \path[line] (x_j) edge[-] (x_k_1);
        \path[line] (x_j) edge[-] (x_k_2);
    \end{tikzpicture}
    \end{center}
    \caption{Bi-partite subgraph of a model around a variable-node $j$ (circle) and factor-node $b$ (square), with indicated messages. Ellipses represent a continuation of the model.}
    \label{fig:graph_uncorrected}
\end{figure}
We write the neighborhood of a variable-node $i$ as $\mathcal{F}(i)$, which collects all factor-nodes in $\mathcal{F}$ that are direct neighbors of $i$. Similarly, $\mathcal{V}(a)$ collects all variable-nodes in $\mathcal{V}$ that are direct neighbors of $a$.

Suppose we are interested in obtaining a posterior belief $q_j(x_j)$. The belief propagation algorithm \citep{pearl1982reverend} then prescribes we send messages from the branches of the graph towards the variable-node of interest, following the recursive application of the belief propagation update rules:
\begin{subequations}
\label{eq:bp_messages}
\begin{align}
    \mu_{jb}(x_j) &= \prod_{\substack{a\in \mathcal{F}(j)\\a\neq b}} \mu_{aj}(x_j)\\
    \mu_{bj}(x_j) &= \int f_b(x_b) \prod_{\substack{i\in \mathcal{V}(b)\\i\neq j}} \mu_{ib}(x_i) \d{x_{b\setminus j}}\,,
\end{align}
\end{subequations}
where $x_{b\setminus j}$ collects all $x_b$ with the exception of $x_j$. Here, $\mu_{jb}(x_j)$ represents the message from a variable-node $j\in \mathcal{V}$ to a neighboring factor-node $b\in \mathcal{F}(j)$; and reversely for $\mu_{bj}(x_j)$. These messages are illustrated in Fig.~\ref{fig:graph_uncorrected}.
The posterior belief can then be expressed as
\begin{align}
    q_j(x_j) &= \frac{1}{Z_j} \mu_{jb}(x_j) \mu_{bj}(x_j)\,, \label{eq:bp_q_j}    
\end{align}
with $Z_j = \int \mu_{jb}(x_j) \mu_{bj}(x_j) \d{x_j}$ a normalizing constant.

In practice, for numerical stability, messages are often re-normalized after computation. Furthermore, messages are usually scheduled for computation, and are often referred to by their position in the schedule instead of their location in the graph. We will use a similar notation in Sec.~\ref{sec:results}. See \citep{bishop2006pattern} for a more detailed introduction to (approximate) inference on bi-partite graphs.

\subsection{Bethe Free Energy Interpretation}
The Bethe free energy for a factorized model of the form of \eqref{eq:model} is defined as
\begin{align}
    F[q] &= \sum_{a\in \mathcal{F}} U_a[q_a] - \sum_{a\in \mathcal{F}} H[q_a] + (d_i - 1) \sum_{i\in \mathcal{V}} H[q_i]\,, \label{eq:bfe}
\end{align}
where $d_i$ represents the degree of variable $x_i$. Here $U_a[q_a] = -\int q_a(x_a) \log f_a(x_a) \d{x_a}$ denotes the average energy for factor $f_a$, and $H[q_a]=-\int q_a(x_a) \log q_a(x_a) \d{x_a}$ denotes the entropy. The Bethe free energy is optimized with imposed normalization and marginalization constraints:
\begin{subequations}
\label{eq:norm_marg}
\begin{align}
    \int q_a(x_a) \d{x_{a\setminus j}} &= q_j(x_j), \forall a \in \mathcal{F}, \forall j\in \mathcal{V}(a) \label{eq:marg}\\
    \int q_a(x_a) \d{x_a} &= 1, \forall a \in \mathcal{F}\\
    \int q_i(x_i) \d{x_i} &= 1, \forall i \in \mathcal{V}\,,
\end{align}
\end{subequations}
such that the $q_a$ and $q_i$ represent (approximate) posterior probability distributions (beliefs).

\subsection{Free Energy Minimization for Active Inference}
Active Inference usually defines dynamic models that specialize variables into parameters, states, observation and control sequences for past and future times. Free energy minimization for ActInf is then presented as a dual objective, where minimization of free energy for a model of past variables accounts for state and parameter estimation (perception), and free energy minimization of free energy for a model of future variables accounts for policy planning \citep{baltieri2018modularity,van2019application}.

In the present paper we assume that the current state is observed and that model parameters are given. Therefore, this paper only concerns inference for policy planning. Extensions for perception are however straightforward. Chance constraints only affect inference for planning, and therefore standard techniques for state estimation and parameter learning can be employed \citep{van_de_laar_simulating_2019}.

Furthermore, the current paper employs the Bethe Free Energy (BFE) formulation \eqref{eq:bfe} for policy planning \citep{schwobel_active_2018,van_de_laar_simulating_2019} instead of the more traditional Expected Free Energy (EFE) \citep{friston_active_2015}. The BFE is known to lack the epistemic qualities of the EFE \citep{schwobel_active_2018}, which can be compensated for by introducing an additional mutual information term between the states and the observations to the BFE objective \citep{parr2019generalised}. The benefit of the uncompensated BFE however, is that traditional message passing algorithms, including (loopy) belief propagation, variational message passing, expectation propagation and generalized belief propagation algorithms can all be derived as fixed-point equations of the variational free energy by the use of variational calculus, see \citep{yedidia2000generalized,heskes_stable_2003,yedidia_constructing_2005,dauwels_variational_2007,zhang2017unifying}.

\subsection{Chance Constraints}
\label{sec:problem}
A chance constraint imposes that the probability mass of a belief $q_j(x_j), j\in \mathcal{V}$ outside of a `safe' region $\mathcal{S}_j \subset \mathcal{X}_j$ cannot exceed a pre-set threshold $\epsilon \in [0, 1]$. Formally, a chance constraint imposes the inequality
\begin{align}
    1 - \epsilon &\leq \int_{\mathcal{S}_j} q_j(x_j) \d{x_j}\notag\\
    &= \int_{\mathcal{X}_j} q_j(x_j)\, g_j(x_j) \d{x_j}\,, \label{eq:chance}
\end{align}
with
\begin{align*}
    g_j(x_j) =
    \begin{cases}
        1 \text{ if } x_j \in \mathcal{S}_j\\
        0 \text{ otherwise}\,.
    \end{cases}
\end{align*}

Our problem statement then becomes two-fold, namely:
\begin{enumerate}
    \item Find the stationary points of the Bethe free energy \eqref{eq:bfe} under the normalization and marginalization constraints of \eqref{eq:norm_marg} and chance constraints of the form \eqref{eq:chance} (Theorem \ref{thm:corrected_belief});
    \item Interpret the retrieval of stationary points of the chance-constrained Bethe free energy as message passing on a factor graph (Theorem \ref{thm:message_passing}).
\end{enumerate}

The simulations of Sec.~\ref{sec:results} further specialize the model variables into state, observation and control sequences and demonstrate the added value of chance constraints in an ActInf setting. Crucially, with an interpretation of chance constraints in terms of message passing on a factor graph, chance constraints can be readily applied to any factorized model. Formulating chance constraints as a click-on module for approximate inference then greatly improves the application range of chance constraints.

\section{Chance-Constrained Message Passing}
\label{sec:methods}
In this section we formulate the method of chance-constrained message passing. We identify the stationary points of the chance-constrained Bethe free energy and interpret the result in terms of message passing on a factor graph. We work towards a practical message-passing update rule for chance-constrained variables, as summarized in Algorithm~\ref{alg:chance}. A brief introduction to variational calculus is available in Appendix~\ref{app:calculus_of_variations}. Proofs can be found in Appendix~\ref{app:proofs}.

\subsection{Stationary Points}
From the Bethe free energy \eqref{eq:bfe} and the constraints of \eqref{eq:norm_marg}, \eqref{eq:chance}, we can construct the Lagrangian
\begin{align}
    L[q] &= F[q] + \sum_{i\in \mathcal{V}} \gamma_i \left[ \int q_i(x_i) \d{x_i} - 1 \right] + \sum_{a\in \mathcal{F}} \gamma_a \left[ \int q_a(x_a) \d{x_a} - 1 \right] \notag\\
    & + \sum_{a\in \mathcal{F}} \sum_{i\in \mathcal{V}(a)} \int \zeta_{ia}(x_i)\left[q_i(x_i) - \int q_a(x_a) \d{x_{a\setminus i}}\right] \d{x_i} \notag\\
    & + \sum_{i\in \mathcal{V}} \eta_i\left[\int q_i(x_i) g_i(x_i) \d{x_i} - (1 - \epsilon)\right] \label{eq:L_q}\,,
\end{align}
where the Lagrange multipliers $\gamma, \zeta, \eta$ enforce the constraints of \eqref{eq:norm_marg}, \eqref{eq:chance}.

Under strong duality, for the inequality constraint in \eqref{eq:chance}  we have the complementary slackness condition \cite[Ch.~5]{b_boyd}. This condition states that for optimality we have $ \eta_i\left[\int q_i(x_i) g_i(x_i) \d{x_i} - (1 - \epsilon)\right] = 0$. Therefore, either  $\eta_i > 0$, which implies that the chance constraint of \eqref{eq:chance} holds with equality (active) or $\eta_i=0$, which implies that the chance constraint may hold without equality (inactive).
In other words, the complementary slackness condition requires us to  consider two scenarios:  i) \eqref{eq:chance} holds with equality for  $\eta_i >0$  and ii) \eqref{eq:chance} is satisfied under $\eta_i = 0$. 
Hence, if $\eta_i >0$, the chance constraint is activated and enforced with equality.

In Lemmas~\ref{lem:q_b_star}, \ref{lem:q_j_star} we express the stationary points of $L[q]$ in terms of the beliefs. The proofs are presented in Appendix~\ref{pf:lem:q_b_star} and ~Appendix~\ref{pf:lem:q_j_star}. 

\begin{lemma}
    \label{lem:q_b_star}
    Stationary points of \eqref{eq:L_q} as a functional of $q_b, b\in \mathcal{F}$, are of the form
    \begin{align}
        q_b^*(x_b) &= \frac{1}{Z_b} f_b(x_b) \prod_{i\in \mathcal{V}(b)} \mu_{ib}(x_i)\,, \label{eq:q_b_star_result}
    \end{align}
    with
    \begin{align*}
        Z_b &= \int f_b(x_b) \prod_{i\in \mathcal{V}(b)} \mu_{ib}(x_i) \d{x_b}
    \end{align*}
    a normalizing constant.
\end{lemma}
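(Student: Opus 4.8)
The plan is to apply the calculus of variations directly: isolate every term in the Lagrangian $L[q]$ of \eqref{eq:L_q} that depends on the factor belief $q_b$, compute the functional derivative with respect to $q_b(x_b)$, and set it to zero. This is the standard route for deriving belief-propagation updates as stationarity conditions of the Bethe free energy, so I expect the chance-constraint machinery to play no role at this stage.

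First I would collect the $q_b$-dependent terms. From the Bethe free energy \eqref{eq:bfe} these are the average energy $U_b[q_b] = -\int q_b(x_b)\log f_b(x_b)\,\d{x_b}$ and the negative entropy $-H[q_b] = \int q_b(x_b)\log q_b(x_b)\,\d{x_b}$. From the constraints I pick up the normalization term $\gamma_b\int q_b(x_b)\,\d{x_b}$ and the marginalization term $-\sum_{i\in\mathcal{V}(b)}\int\zeta_{ib}(x_i)\int q_b(x_b)\,\d{x_{b\setminus i}}\,\d{x_i}$. Every remaining contribution is constant in $q_b$; in particular the chance-constraint term $\sum_{i}\eta_i[\int q_i g_i\,\d{x_i}-(1-\epsilon)]$ couples only to the variable beliefs $q_i$ and therefore drops out entirely, which is exactly why this lemma recovers the plain BP form while the constraint will resurface in the companion statement for $q_j$.

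The one manipulation requiring care is the marginalization term. Since $\zeta_{ib}(x_i)$ depends only on $x_i$, I would collapse the nested integral via $\int\zeta_{ib}(x_i)\int q_b(x_b)\,\d{x_{b\setminus i}}\,\d{x_i} = \int\zeta_{ib}(x_i)\,q_b(x_b)\,\d{x_b}$, so the whole contribution becomes linear in $q_b$ with kernel $\sum_{i\in\mathcal{V}(b)}\zeta_{ib}(x_i)$. Then computing the functional derivative, the entropy term yields $\log q_b(x_b)+1$, the energy term yields $-\log f_b(x_b)$, and the two linear constraint terms yield $\gamma_b$ and $-\sum_{i}\zeta_{ib}(x_i)$. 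Setting the derivative to zero gives $\log q_b^*(x_b) = \log f_b(x_b) - 1 - \gamma_b + \sum_{i\in\mathcal{V}(b)}\zeta_{ib}(x_i)$, hence $q_b^*(x_b) = e^{-1-\gamma_b}\,f_b(x_b)\prod_{i\in\mathcal{V}(b)}e^{\zeta_{ib}(x_i)}$.

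Finally I would define the incoming messages $\mu_{ib}(x_i) := e^{\zeta_{ib}(x_i)}$ and absorb the scalar $e^{-1-\gamma_b}$ into a normalizer $Z_b$ fixed by $\int q_b^*(x_b)\,\d{x_b}=1$, which produces exactly \eqref{eq:q_b_star_result} together with the stated expression for $Z_b$. I do not anticipate a genuine obstacle: the result is the familiar first-order stationarity condition, and the lemma asks only for stationary points, so no convexity or second-order argument is needed. The only points to watch are the bookkeeping in collapsing the marginalization double integral and the identification of the exponentiated marginalization multipliers $\zeta_{ib}$ as the belief-propagation messages $\mu_{ib}$.
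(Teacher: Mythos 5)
Your proposal is correct and follows essentially the same route as the paper's proof: compute the functional derivative of the Lagrangian with respect to $q_b$ (obtaining $\log\frac{q_b(x_b)}{f_b(x_b)} + 1 + \gamma_b - \sum_{i\in\mathcal{V}(b)}\zeta_{ib}(x_i)$), set it to zero, identify $\mu_{ib}(x_i)=\exp\zeta_{ib}(x_i)$, and absorb the remaining constant via the normalization constraint. Your additional remarks on collapsing the marginalization double integral and on the chance-constraint term dropping out are correct and merely make explicit what the paper leaves implicit.
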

\begin{proof}
See Appendix~\ref{pf:lem:q_b_star}.
\end{proof}
Note that the $\mu_{ib}$ have not yet been identified or interpreted as messages. We will explicitly make this connection in Sec.~\ref{sec:cc_mp}.

\begin{lemma}
    \label{lem:q_j_star}
    Stationary points of \eqref{eq:L_q} as a functional of $q_j, j\in \mathcal{V}$, are of the form
    \begin{align}
        q_j^*(x_j; \eta_j) &= \frac{1}{Z_j(\eta_j)} \exp\!\left(-\eta_j g_j(x_j)\right) \prod_{a\in \mathcal{F}(j)} \mu_{aj}(x_j)\,, \label{eq:q_j_star_result}
    \end{align}
    with
    \begin{align*}
        Z_j(\eta_j) &= \int \exp\!\left(-\eta_j g_j(x_j)\right) \prod_{a\in \mathcal{F}(j)} \mu_{aj}(x_j) \d{x_j}
    \end{align*}
    a normalizer that still depends on $\eta_j$.
\end{lemma}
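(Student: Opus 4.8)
The plan is to characterise $q_j^*$ as a stationary point of the Lagrangian \eqref{eq:L_q}, by taking the functional (Euler--Lagrange) derivative of $L[q]$ with respect to $q_j$ and setting it to zero. This mirrors the derivation of Lemma~\ref{lem:q_b_star}, the only new ingredient being the chance-constraint term weighted by $\eta_j$.

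First I would collect the terms of \eqref{eq:L_q} that depend on $q_j(x_j)$. From the Bethe free energy \eqref{eq:bfe} only the weighted variable entropy $(d_j-1)H[q_j] = -(d_j-1)\int q_j \log q_j\,\d{x_j}$ contributes; from the constraint terms I retain the normalizer $\gamma_j \int q_j\,\d{x_j}$, the marginalization terms $\sum_{a\in\mathcal{F}(j)} \int \zeta_{ja}(x_j)\,q_j(x_j)\,\d{x_j}$ (these are precisely the $q_i$-terms of the double sum specialised to $i=j$, one per neighbouring factor $a\in\mathcal{F}(j)$), and the chance-constraint term $\eta_j \int q_j\,g_j\,\d{x_j}$. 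All remaining terms of $L[q]$ are constant in $q_j$.

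Next I would differentiate and solve. The entropy contributes $-(d_j-1)\left(\log q_j(x_j)+1\right)$, while the three linear terms contribute $\gamma_j + \sum_{a\in\mathcal{F}(j)}\zeta_{ja}(x_j) + \eta_j\,g_j(x_j)$. Setting their sum to zero, solving for $\log q_j$, exponentiating, and folding the $x_j$-independent constants into the normalizer yields
\[
    q_j^*(x_j) \propto \exp\!\left(\tfrac{\eta_j}{d_j-1}\,g_j(x_j)\right) \prod_{a\in\mathcal{F}(j)} \exp\!\left(\tfrac{\zeta_{ja}(x_j)}{d_j-1}\right).
\]
I would then \emph{define} $\mu_{aj}(x_j) := \exp\!\left(\zeta_{ja}(x_j)/(d_j-1)\right)$, so that the product $\prod_{a\in\mathcal{F}(j)}\mu_{aj}$ appears; as with the $\mu_{ib}$ of Lemma~\ref{lem:q_b_star}, these are placeholder quantities not yet interpreted as messages (that identification is deferred to Sec.~\ref{sec:cc_mp}). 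After rescaling the remaining exponential and relabelling the chance-constraint multiplier so that the exponent reads $-\eta_j\,g_j(x_j)$, and absorbing the proportionality constant into $Z_j(\eta_j)$, I obtain \eqref{eq:q_j_star_result}.

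The step I expect to be the main obstacle is the sign-and-scale bookkeeping attached to the variable-node entropy weight $(d_j-1)$. Unlike the factor belief of Lemma~\ref{lem:q_b_star}, whose entropy enters with unit weight, here every multiplier is divided by $d_j-1$; the raw Lagrange multiplier of the chance constraint is therefore rescaled by $-(d_j-1)$ to produce the clean factor $\exp(-\eta_j\,g_j)$, and its sign must be tracked carefully so as to remain consistent with the complementary-slackness discussion following \eqref{eq:L_q}. Because $\eta_j$ is intentionally left to be pinned down afterwards --- by enforcing the active chance constraint $\int q_j^*(x_j)\,g_j(x_j)\,\d{x_j} = 1-\epsilon$ --- the statement correctly keeps $Z_j$ as an explicit function $Z_j(\eta_j)$ of the still-undetermined multiplier.
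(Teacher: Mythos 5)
Your Euler--Lagrange computation is correct as far as it goes, and reproduces the paper's intermediate expression $q_j^*(x_j) = \exp\bigl[\tfrac{1}{d_j-1}\bigl(1-d_j+\gamma_j+\sum_{a\in\mathcal{F}(j)}\zeta_{ja}(x_j)+\eta_j g_j(x_j)\bigr)\bigr]$. The gap is in what you do next. You \emph{define} $\mu_{aj}(x_j):=\exp\bigl(\zeta_{ja}(x_j)/(d_j-1)\bigr)$ and then ``relabel'' the multiplier so that the exponent reads $-\eta_j g_j(x_j)$. That makes the lemma true only as a renaming of Lagrange multipliers, and for the wrong objects: the $\mu_{aj}$ in \eqref{eq:q_j_star_result} must be the \emph{factor-to-variable} quantities $\mu_{bj}(x_j)=\int f_b(x_b)\prod_{i\in\mathcal{V}(b), i\neq j}\mu_{ib}(x_i)\,\d{x_{b\setminus j}}$, consistent with the $\mu_{ib}=\exp\zeta_{ib}$ already fixed (without any $1/(d_j-1)$ rescaling) in the proof of Lemma~\ref{lem:q_b_star}. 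They are not free placeholders here: the text immediately after the lemma requires \eqref{eq:q_j_star_result} to reduce to \eqref{eq:bp_q_j} when $\eta_j=0$, Theorem~\ref{thm:message_passing} needs these $\mu_{aj}$ to be belief-propagation messages, and Theorem~\ref{thm:corrected_belief} computes an explicit value $\eta_j^*$ for the \emph{same} multiplier that appears in \eqref{eq:L_q}. Rescaling $\eta_j$ by $-(d_j-1)$ and redefining $\mu_{aj}$ breaks all three links.

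The missing idea is that a stationary point of \eqref{eq:L_q} must satisfy the $q_b$- and $q_j$-stationarity conditions \emph{simultaneously}, coupled through the shared marginalization multipliers $\zeta_{ja}$. The paper obtains a second expression for $q_j^*$ by applying the marginalization constraint \eqref{eq:marg} to Lemma~\ref{lem:q_b_star}'s $q_b^*$, giving $q_j^*=\tfrac{1}{Z_b}\mu_{jb}(x_j)\mu_{bj}(x_j)$ for every $b\in\mathcal{F}(j)$. Taking the product of this identity over the $d_j-1$ factors $c\neq b$ and equating it with the $(d_j-1)$-th power of your Euler--Lagrange expression cancels the $1/(d_j-1)$ in the exponent; cancelling the duplicated $\mu_{jc}$ terms then yields $\mu_{jb}(x_j)\propto\exp\!\left(-\eta_j g_j(x_j)\right)\prod_{a\neq b}\mu_{aj}(x_j)$, so the sign flip on $\eta_j$ falls out of moving the term across the equation rather than from a relabelling. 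Substituting this back into the marginalization identity recovers \eqref{eq:q_j_star_result} with the correct messages and the original multiplier. Without this consistency step your argument establishes a formally similar but different statement, not the lemma as used in the rest of the paper.
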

\begin{proof}
See Appendix~\ref{pf:lem:q_j_star}.
\end{proof}
Note that, in contrast to \eqref{eq:bp_q_j}, this result incorporates an additional exponential term for $\eta_j$. We will identify this multiplier in Sec.~\ref{sec:active_cc}. However, we  already know that when the chance constraint for $j$ is inactive, hence $\eta_j=0$ as a consequence of the complementary slackness condition. In this case, \eqref{eq:q_j_star_result} reduces to \eqref{eq:bp_q_j}.

\subsection{Active Chance Constraint}
\label{sec:active_cc}
In this section, we identify the stationary points under active chance constraint. The result is stated in Theorem~\ref{thm:corrected_belief}.  

\begin{theorem}
    \label{thm:corrected_belief}
    Under active chance constraint, stationary points of \eqref{eq:L_q} as a functional of $q_j, j\in \mathcal{V}$ are of the form
    \begin{align}
        q_j^*(x_j; \eta_j=\eta_j^*) &=
        \begin{cases}
            \frac{1 - \epsilon}{\Phi^{(0)}_j} q^{(0)}_j(x_j) &\text{ if } x_j \in \mathcal{S}_j\\
            \frac{\epsilon}{1 - \Phi^{(0)}_j} q^{(0)}_j(x_j) &\text{ otherwise,}
        \end{cases} \label{eq:optimal_correction}
    \end{align}
    with
    \begin{subequations}
    \label{eq:q_phi_0}
    \begin{align}
        q^{(0)}_j(x_j) &= q_j^*(x_j; \eta_j=0)\,, \label{eq:q_0}\\
        \Phi^{(0)}_j &= \int_{\mathcal{S}_j} q^{(0)}_j(x_j) \d{x_j}\,, \label{eq:phi_0}\\
        \eta_j^* &= \log (\epsilon \Phi^{(0)}_j) - \log (1 - \epsilon) - \log (1 - \Phi^{(0)}_j)\,.
    \end{align}
    \end{subequations}
\end{theorem}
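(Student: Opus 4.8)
The plan is to start from Lemma~\ref{lem:q_j_star} and exploit the fact that $g_j$ is a $\{0,1\}$-valued indicator of the safe region $\mathcal{S}_j$. Consequently the exponential tilt in \eqref{eq:q_j_star_result} collapses to two constant factors: $\exp(-\eta_j g_j(x_j))$ equals $e^{-\eta_j}$ on $\mathcal{S}_j$ and equals $1$ on its complement. Writing $q^{(0)}_j$ as in \eqref{eq:q_0}, the product of incoming messages satisfies $\prod_{a\in\mathcal{F}(j)}\mu_{aj}(x_j) = Z_j(0)\, q^{(0)}_j(x_j)$, so that
\begin{align*}
    q_j^*(x_j;\eta_j) = \frac{Z_j(0)}{Z_j(\eta_j)}\exp\!\left(-\eta_j g_j(x_j)\right) q^{(0)}_j(x_j)\,.
\end{align*}
This re-expresses the tilted stationary point purely as a rescaling of the uncorrected belief, with a scale that differs across $\mathcal{S}_j$ and its complement.

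Next I would compute the normalizer by splitting the integral across the safe region and its complement. Using $\Phi^{(0)}_j$ from \eqref{eq:phi_0}, this gives $Z_j(\eta_j) = Z_j(0)\bigl[e^{-\eta_j}\Phi^{(0)}_j + (1-\Phi^{(0)}_j)\bigr]$, which fixes both piecewise prefactors as explicit functions of $\eta_j$. I would then invoke complementary slackness in the active case ($\eta_j \neq 0$), which forces the chance constraint \eqref{eq:chance} to hold with equality; that is, the safe-region mass of $q_j^*(\cdot;\eta_j)$ must equal $1-\epsilon$. Integrating the inside branch yields
\begin{align*}
    \int_{\mathcal{S}_j} q_j^*(x_j;\eta_j)\,\d{x_j} = \frac{e^{-\eta_j}\Phi^{(0)}_j}{e^{-\eta_j}\Phi^{(0)}_j + (1-\Phi^{(0)}_j)} = 1-\epsilon\,.
\end{align*}

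The final step is to solve this relation for $\eta_j^*$ and back-substitute. The equation is linear in $e^{-\eta_j}$, and rearranging gives $e^{-\eta_j^*} = \tfrac{(1-\epsilon)(1-\Phi^{(0)}_j)}{\epsilon\,\Phi^{(0)}_j}$; taking logarithms reproduces the stated $\eta_j^*$. Substituting this value back into the inside prefactor collapses it, via the constraint above, to $\tfrac{1-\epsilon}{\Phi^{(0)}_j}$, while the outside prefactor $\bigl[e^{-\eta_j^*}\Phi^{(0)}_j + (1-\Phi^{(0)}_j)\bigr]^{-1}$ collapses to $\tfrac{\epsilon}{1-\Phi^{(0)}_j}$; equivalently, the latter follows automatically because the complement must carry the residual mass $\epsilon$. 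This yields exactly the piecewise form \eqref{eq:optimal_correction}.

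I do not expect a deep obstacle here: the computation is driven entirely by the indicator structure of $g_j$, and the rest is a single linear solve. The one point that needs care is the normalizer bookkeeping — keeping the ratio $Z_j(0)/Z_j(\eta_j)$ explicit and verifying that enforcing the safe-region mass to be $1-\epsilon$ is consistent with the complement integrating to $\epsilon$, so that both branches of \eqref{eq:optimal_correction} are simultaneously normalized.
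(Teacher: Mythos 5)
Your proposal is correct and follows essentially the same route as the paper's proof: both start from the tilted stationary point of Lemma~\ref{lem:q_j_star}, compute the safe-region mass $\Phi^{(0)}_j e^{-\eta_j}/\bigl[\Phi^{(0)}_j e^{-\eta_j} + (1-\Phi^{(0)}_j)\bigr]$, set it to $1-\epsilon$ via complementary slackness, solve the resulting linear equation for $e^{-\eta_j^*}$, and back-substitute to obtain the piecewise form \eqref{eq:optimal_correction}. Your explicit bookkeeping of $Z_j(0)/Z_j(\eta_j)$ is a slightly more detailed presentation of the same computation.
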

\begin{proof}
See Appendix~\ref{pf:thm:corrected_belief}. 
\end{proof}

This remarkable result tells us that the corrected belief $q_j^*(x_j; \eta_j=\eta_j^*)$ is obtained by scaling the probability mass of the uncorrected belief $q_j^{(0)}(x_j)$ over the respective safe and unsafe regions. This defines the corrected belief as a mixture of truncated beliefs. The optimal scaling of \eqref{eq:optimal_correction} ensures that the overflow is equal to $\epsilon$.

The complementary slackness condition ensures that the chance constraint is only enforced if the probability mass of the \emph{unconstrained} belief overflows the `safe' region $\mathcal{S}_j$ by more than $\epsilon$; i.e., the uncorrected belief is `unsafe' when
\begin{align}
    \epsilon &< 1 - \Phi^{(0)}_j\,, \label{eq:is_unsafe}
\end{align}
where we refer to $\Phi^{(0)}_j$ as the `safe mass'.

If \eqref{eq:is_unsafe} is satisfied, then the posterior density $q^{(0)}_j(x_j)$ is corrected according to \eqref{eq:optimal_correction}, which `pushes' the probability mass (just) back inside the safe region.

\subsection{Chance-Constrained Message Passing}
\label{sec:cc_mp}
In this section, we show that chance constraints \eqref{eq:optimal_correction} can be interpreted as auxiliary factor-nodes (with a specific node-function), and can be enforced by belief propagation in an augmented graph.

\begin{theorem}
    \label{thm:message_passing}
    Given a bipartite graph $\mathcal{G} = (\mathcal{F}, \mathcal{V}, \mathcal{E})$ with a variable node $j \in \mathcal{V}$, and an associated Bethe free energy \eqref{eq:bfe} with a chance constraint \eqref{eq:chance} on the belief $q_j(x_j)$. Then, stationary points of \eqref{eq:L_q} can be obtained by belief propagation on an augmented graph $\mathcal{G}' = (\mathcal{F}', \mathcal{V}, \mathcal{E}')$, where
    \begin{subequations}
    \label{eq:augmented_graph}
    \begin{align}
        \mathcal{F}' &= \mathcal{F} \cup g\\
        \mathcal{E}' &= \mathcal{E} \cup (j, g)\,,
    \end{align}
    \end{subequations}
    and auxiliary node function
    \begin{align}
        \label{eq:auxiliary_node_function}
        f_g(x_j) &=
        \begin{cases}
            \frac{1 - \epsilon}{\Phi^{(0)}_j} &\text{ if } x_j \in \mathcal{S}_j\\
            \frac{\epsilon}{1 - \Phi^{(0)}_j} &\text{ otherwise.}
        \end{cases}
    \end{align}
\end{theorem}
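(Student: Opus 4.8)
The plan is to show that inserting the degree-one auxiliary factor $g$ and applying the belief-propagation rules \eqref{eq:bp_messages} at node $j$ on the augmented graph $\mathcal{G}'$ of \eqref{eq:augmented_graph} reproduces exactly the corrected belief of Theorem~\ref{thm:corrected_belief}. Since the argument reduces to a purely local computation at $j$, I would treat the messages $\mu_{aj}$ arriving from the original factors $a\in\mathcal{F}(j)$ as given and analyze only the contribution of the new factor.

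First I would compute the message emitted by the auxiliary node. Because \eqref{eq:augmented_graph} attaches $g$ to $j$ alone, we have $\mathcal{V}(g)=\{j\}$, so the factor-to-variable rule in \eqref{eq:bp_messages} collapses: the product over the other neighbors of $g$ is empty and there is no variable left to integrate out, giving $\mu_{gj}(x_j)=f_g(x_j)$.

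Next I would assemble the belief at $j$. In $\mathcal{G}'$ the neighborhood of $j$ is $\mathcal{F}(j)\cup\{g\}$, and the variable-node belief is the normalized product of all incoming messages (which follows from \eqref{eq:bp_q_j} after writing $\mu_{jb}=\prod_{a\neq b}\mu_{aj}$), so $q_j(x_j)=\frac{1}{Z_j} f_g(x_j)\prod_{a\in\mathcal{F}(j)}\mu_{aj}(x_j)$. Invoking Lemma~\ref{lem:q_j_star} at $\eta_j=0$ identifies $\prod_{a\in\mathcal{F}(j)}\mu_{aj}(x_j)=Z_j(0)\,q_j^{(0)}(x_j)$, hence $q_j(x_j)\propto f_g(x_j)\,q_j^{(0)}(x_j)$. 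Substituting the piecewise definition \eqref{eq:auxiliary_node_function} and integrating, the safe region contributes $\frac{1-\epsilon}{\Phi_j^{(0)}}\,\Phi_j^{(0)}=1-\epsilon$ and its complement contributes $\frac{\epsilon}{1-\Phi_j^{(0)}}\,(1-\Phi_j^{(0)})=\epsilon$, so that $\int f_g(x_j)\,q_j^{(0)}(x_j)\,\d{x_j}=1$. The normalizer is therefore automatically $Z_j=Z_j(0)$, leaving $q_j(x_j)=f_g(x_j)\,q_j^{(0)}(x_j)$, which is term-by-term identical to \eqref{eq:optimal_correction}.

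To tie the message-passing result back to the Lagrangian analysis I would verify that $f_g$ coincides, up to an irrelevant constant, with the exponential correction $\exp(-\eta_j g_j(x_j))$ of Lemma~\ref{lem:q_j_star}: at the stationary multiplier one finds $\exp(-\eta_j^*)=\frac{(1-\epsilon)(1-\Phi_j^{(0)})}{\epsilon\,\Phi_j^{(0)}}$, which is exactly the ratio of the two branches of \eqref{eq:auxiliary_node_function}, so $f_g(x_j)\propto\exp(-\eta_j^* g_j(x_j))$ and the constant washes out under normalization. I expect the principal obstacle to be conceptual rather than computational: one must argue that the self-referential definition of $f_g$ through $\Phi_j^{(0)}$ --- the safe mass of the \emph{uncorrected} belief, formed from the incoming messages \emph{excluding} $g$ --- is consistent within a message-passing schedule, i.e.\ that $q_j^{(0)}$ is read off before the auxiliary message $\mu_{gj}$ is injected, and that this injected message then legitimately propagates the constraint into the remainder of the graph through the outgoing messages of $j$. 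The restriction to the active case \eqref{eq:is_unsafe} should also be stated explicitly, since the auxiliary node is inserted only when the uncorrected mass overflows $\mathcal{S}_j$ by more than $\epsilon$.
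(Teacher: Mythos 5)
Your proposal is correct and follows essentially the same route as the paper: identify the degree-one auxiliary factor's outgoing message with $f_g(x_j)$, recognize the product of the original incoming messages as the uncorrected belief $q_j^{(0)}$ via Lemma~\ref{lem:q_j_star} at $\eta_j=0$, and observe that their normalized product reproduces \eqref{eq:optimal_correction}. Your additional checks (that $\int f_g\,q_j^{(0)}\,\d{x_j}=1$, that $f_g\propto\exp(-\eta_j^* g_j)$, and the remark on scheduling the computation of $\Phi_j^{(0)}$ before injecting $\mu_{gj}$) are sound refinements of the same argument rather than a different approach.
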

\begin{proof}
See Appendix~\ref{pf:thm:message_passing}. 
\end{proof}
Theorem \ref{thm:message_passing} shows that chance-constrained message passing can be seamlessly incorporated within the belief propagation framework. Chance constraints simply enter the model definition as auxiliary factors, whose factor function depends upon the incoming message, see Fig.~\ref{fig:graph}. Because uncorrected belief \eqref{eq:q_0} is being represented by the (re-normalized) incoming message $\mu_{jg}(x_j)$, this allows for a modular application of chance constraints by augmenting the original graphical model with auxiliary nodes.
 
\begin{figure}[h]
    \hfill
    \begin{center}
    \begin{tikzpicture}
        [node distance=20mm,auto,>=stealth']


        \node[smallbox] (f_b) {$f_b$};
        \node[roundbox, right of=f_b] (x_j) {$x_j$};
        \node[smallbox, dashed, above of=x_j, node distance=17mm] (f_g) {$f_g$};

        \node (x_i_1) at ($(f_b)+(-1.8,0.9)$) {};
        \node (x_i_2) at ($(f_b)+(-1.8,-0.9)$) {};
        \node[left of=f_b, node distance=1.5cm, yshift=1mm] () {$\vdots$};

        \node (x_k_1) at ($(x_j)+(1.8,0.9)$) {};
        \node (x_k_2) at ($(x_j)+(1.8,-0.9)$) {};
        \node[right of=x_j, node distance=1.5cm, yshift=1mm] () {$\vdots$};

        \path[line] (x_i_1) edge[-] (f_b);
        \path[line] (x_i_2) edge[-] (f_b);
        \path[line] (f_b) edge[-] node[anchor=north]{$\substack{\rightarrow\\\mu_{bj}(x_j)}$} node[anchor=south]{$\substack{\mu_{jb}(x_j)\\ \leftarrow}$} (x_j);

        \path[line] (x_j) edge[-] (x_k_1);
        \path[line] (x_j) edge[-] (x_k_2);
        \path[dashed] (f_g) edge[-] node[anchor=east, pos=0.35]{$_{\mu_{gj}(x_j) \downarrow}$} node[anchor=west, pos=0.35]{$_{\uparrow \mu_{jg}(x_j)}$} (x_j);
    \end{tikzpicture}
    \end{center}
    \caption{Bi-partite graph around a chance-constrained variable $x_j$, with indicated auxiliary factor $f_g$ (dashed square) and messages. Ellipses represent the continued model by an arbitrary (possibly zero) number of connected edges.}
    \label{fig:graph}
\end{figure}
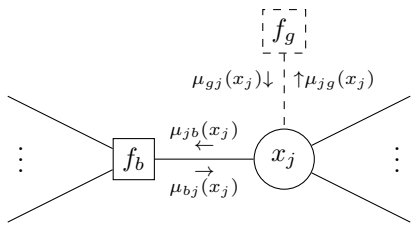

\subsection{Gaussian Approximation}
Since the message $\mu_{gj}(x_j)$ introduces discontinuities, the computations for dependent messages may grow prohibitively complex. For efficient computations, it can be helpful to make a Gaussian approximation $\tilde{q}_j(x_j)$ to the corrected belief $q_j^*(x_j; \eta_j=\eta_j^*)$, e.g., by moment matching. The resulting (approximate) message then follows from
\begin{align*}
    \mu_{gj}(x_j) = \tilde{q}^{(n)}_j(x_j)/\mu_{jg}(x_j)\,.
\end{align*}
If the message $\mu_{jg}(x_j)$ is also Gaussian, this computation is easily performed by subtracting the canonical statistics. This procedure then resembles the expectation propagation algorithm \citep{minka2001expectation,cox2018robust}. Interestingly, the expectation propagation algorithm can also be derived in terms of Bethe free energy optimization, where the marginalization constraints \eqref{eq:marg} are replaced by moment-matching constraints \citep{zhang2017unifying}. This makes the Gaussian approximation consistent with the Lagrangian approach as presented in this paper.

The approximated belief $\tilde{q}_j(x_j)$ however renders the chance constraint \eqref{eq:chance} inexact. As a result, the approximated belief needs to be iteratively re-corrected:
\begin{align}
    q^{(n)}_j(x_j) &=
    \begin{cases}
        \frac{1 - \epsilon}{\Phi^{(n-1)}_j} \tilde{q}^{(n-1)}_j(x_j) &\text{ if } x_j \in \mathcal{S}_j\\
        \frac{\epsilon}{1 - \Phi^{(n-1)}_j} \tilde{q}^{(n-1)}_j(x_j) &\text{ otherwise,}
    \end{cases} \label{eq:iterative_correction}
\end{align}
where $n$ denotes an iteration counter. This leads to the procedure summarized in Alg.~\ref{alg:chance}, and depicted in Fig.~\ref{fig:curve}.

\begin{algorithm}
\caption{Chance-constrained message passing with Gaussian approximation}
\label{alg:chance}
\begin{algorithmic}
\STATE {Given a Gaussian inbound message $\mu_{jg}(x_j)$}
\STATE {Compute the uncorrected belief $q^{(0)}_j(x_j)$ through \eqref{eq:q_0}}
\STATE {Compute the safe mass $\Phi^{(0)}_j$ through \eqref{eq:phi_0}}
\STATE {Initialize the approximated belief $\tilde{q}^{(0)}_j(x_j) = q^{(0)}_j(x_j)$}
\STATE {Initialize the iteration counter $n = 0$}
\WHILE {$\epsilon + \delta < 1 - \Phi^{(n)}_j$}
    \STATE {\% \emph{Chance constraint is violated with some tolerance $\delta$}}
    \STATE {Increase the counter $n \leftarrow n+1$}
    \STATE {Compute the corrected belief $q^{(n)}_j(x_j)$ through \eqref{eq:iterative_correction}}
    \STATE {Approximate $\tilde{q}^{(n)}_j(x_j) \approx q^{(n)}_j(x_j)$ by Gaussian moment matching}
    \STATE {Compute $\Phi^{(n)}_j = \int_{\mathcal{S}_j} \tilde{q}^{(n)}_j(x_j) \d{x_j}$, the safe mass of the approximated belief}
\ENDWHILE
\RETURN {The message $\mu_{gj}(x_j) = \tilde{q}^{(n)}_j(x_j)/\mu_{jg}(x_j)$}
\end{algorithmic}
\end{algorithm}

\begin{figure}[h]
    \hfill
    \begin{center}
    \pgfmathdeclarefunction{gauss}{2}{%
      \pgfmathparse{1/(#2*sqrt(2*pi))*exp(-((x-#1)^2)/(2*#2^2))}%
    }

    \pgfmathdeclarefunction{phi}{3}{%
      \pgfmathparse{1-1/(1 + exp(-0.07056*((#1-#2)/#3)^3 - 1.5976*(#1-#2)/#3))}%
    }

    \begin{tikzpicture}
    \begin{axis}[
      no markers, domain=0:5, samples=100,
      axis lines*=left, xlabel=$x_j$,
      every axis x label/.style={at=(current axis.right of origin),anchor=west},
      height=4.5cm, width=10cm,
      xtick={2}, xticklabels=\empty, ytick=\empty,
      enlargelimits=false, clip=false, axis on top,
      grid = major
      ]
      \addplot [fill=black!20, draw=none, domain=2:5] {gauss(3,1)} \closedcycle;
      \addplot [very thick,black] {gauss(3,1)};

      \draw [yshift=-0.4cm, latex-latex](axis cs:2,0) -- node [fill=white] {$\mathcal{S}_j$} (axis cs:5,0);
      \node [anchor=south] (q_tilde_n_min) at (axis cs:1.0,0.11) {$\tilde{q}^{(n-1)}_j(x_j)$};
      \node [anchor=south] (phi_n_min) at (axis cs:3.15,0.11) {$\Phi^{(n-1)}_j$};
    \end{axis}
    \end{tikzpicture}
    \begin{tikzpicture}
    \begin{axis}[
      no markers, domain=0:5, samples=100,
      axis lines*=left, xlabel=$x_j$,
      every axis x label/.style={at=(current axis.right of origin),anchor=west},
      height=4.5cm, width=10cm,
      xtick={2}, xticklabels=\empty, ytick=\empty,
      enlargelimits=false, clip=false, axis on top,
      grid = major
      ]
      \addplot [fill=black!20, draw=none, domain=0:2] {0.07/(1-phi(2,3,1))*gauss(3,1)} \closedcycle;
      \addplot [very thick, black, domain=0:2] {(0.07/(1-phi(2,3,1)))*gauss(3,1)};
      \addplot [very thick, black, domain=2:5] {((1-0.07)/phi(2,3,1))*gauss(3,1)};
      \addplot [very thick, black, dashed] {gauss(3.22,0.85)};

      \draw [yshift=-0.4cm, latex-latex](axis cs:2,0) -- node [fill=white] {$\mathcal{S}_j$} (axis cs:5,0);
      \node [anchor=south] (eps) at (axis cs:1.75,0.01) {$\epsilon$};
      \node [anchor=south] (q_n) at (axis cs:3.9,0.11) {$q^{(n)}_j(x_j)$};
      \node [anchor=south] (q_tilde_n) at (axis cs:4.3,0.35) {$\tilde{q}^{(n)}_j(x_j)$};
    \end{axis}
    \path (current bounding box.north) ++ (0,0.5cm); 
    \end{tikzpicture}
    \end{center}
    \caption{Example of beliefs as computed by Algorithm~\ref{alg:chance}. The top figure evaluates the probability mass within the ``safe'' zone. The bottom figure applies the correction (solid curve) and approximates the corrected belief by Gaussian moment matching (dashed curve).}
    \label{fig:curve}
\end{figure}

With this algorithm, we have derived a practical chance-constrained message update from the first principles. The message update can be readily applied to any continuous variable that requires a chance constraint. Note however, that when multiple chance constraints are imposed on the model, the message passing algorithm itself becomes an iterative procedure because of circular message dependencies. For example, a message incoming to an auxiliary node $g$ might (indirectly) depend on a message that exits another auxiliary node $h$. In turn, this exiting message depends on the incoming message to $h$ \eqref{alg:chance}, which depends on the message exiting $g$, etcetera. In order to break this circular message dependency, uninformative messages can be used to initialize the algorithm.

\section{Simulations}
\label{sec:results}
In this section we simulate a drone that aims to elevate itself above a given height threshold with a preset probability, under the influence of a stochastic vertical wind. We define the drone elevation level over time by $x = \{x_0, \dots, x_t, \dots, x_L\}, x_t\in \mathbb{R}$, and actions (ascension velocity) $a = \{a_0, \dots, a_t, \dots, a_L\}, a_t\in \mathbb{R}$. A time-dependent $m_{w,t}$ defines the expected wind velocity that acts upon the agent. The discrete-time stochastic system is defined as:
\begin{align*}
    w_t &\sim \N{m_{w,t}, v_w}\\
    x_{t+1} &= x_t + a_t + w_t\,,
\end{align*}
where $v_w$ defines the wind velocity variance.

We define an agent that directly observes its elevation level and has knowledge of the statistical system properties $m_{w,t}$ and $v_w$. The agent models future states of the system with a fixed time horizon $T$. As a shorthand notation, we write the future (including current) states $\overline{x}_t = \{x_t, \dots, x_{t+T}\}$ and control variables $\overline{u}_t = \{u_t, \dots, u_{t+T-1}\}$. For notational convenience, we drop the $t$ subscript from these collections. The agent model at time $t$ is defined as:
\begin{align}
    f_t(\overline{x}, \overline{u}) = \prod_{k=t}^{t+T-1} p_{x,k}(x_{k+1}|u_k, x_k) p_u(u_k)\,, \label{eq:agent_model}
\end{align}
with a respective state transition model and control prior
\begin{subequations}
\label{eq:agent_state_control}
\begin{align}
    p_{x,k}(x_{k+1}|u_k, x_k) &= \N{x_{k+1} | x_k + u_k + m_{w,k}, v_w}\\
    p_u(u_k) &= \N{u_k | 0, \lambda^{-1}}\,.
\end{align}
\end{subequations}

We factorize and constrain the variational posterior distribution such that \citep{van_de_laar_simulating_2019}
\begin{align}
    q_t(\overline{x}_{\setminus t}, \overline{u}) = q_t(\overline{x}_{\setminus t}) \prod_{k=t}^{t+T-1} \delta(u_k - a_k)\,, \label{eq:variational_posterior}
\end{align}
where $\overline{x}_{\setminus t}$ indicates the collection of latent states (the state sequence $\overline{x}$ without the observed current state $x_t$). The goal of the agent controller then becomes to find the policy $\pi_t = \{a_t, \dots, a_{t+T-1}\}$ that minimizes the Bethe free energy
\begin{align}
    F[q_t; x_t, \pi_t] &= \idotsint q_t(\overline{x}_{\setminus t}, \overline{u}) \log \frac{q_t(\overline{x}_{\setminus t}, \overline{u})}{f_t(\overline{x}, \overline{u})} \d{\overline{x}_{\setminus t}} \d{\overline{u}}\,, \label{eq:system_bfe}
\end{align}
under the normalization and marginalization constraints of \eqref{eq:norm_marg} and chance constraints
\begin{align*}
    1 - \epsilon \leq \int_{\mathcal{S}} q_{x,k}(x_k) \d{x_k},\, \forall k \in \{t+1, \dots, t+T\}\,,
\end{align*}
where the safe region $\mathcal{S}=(1, \infty)$ and violation probability $\epsilon$ are identical for all future state variables.

\subsection{Graphical Model and Schedule}
As detailed in Sec.~\ref{sec:methods}, Bethe free energy minimization under chance constraints can be performed by message passing on an augmented model. The graphical representation of the augmented model is depicted in Fig.~\ref{fig:augmented_model}.

\begin{figure}[h]
    \hfill
    \begin{center}
    \begin{tikzpicture}
        [node distance=15mm,auto,>=stealth']

        \node[roundbox, fill=darkgrey] (x_t) {};
            \node[yshift=-6mm] at (x_t) {$x_t$};

        \node[smallbox, right of=x_t, node distance=3cm] (p_x_k) {};
            \node[yshift=-5mm] at (p_x_k) {$p_{x,k}$};
        \node[roundbox, above of=p_x_k, node distance=2cm] (u_k) {};
            \node[xshift=6mm, yshift=3mm] at (u_k) {$u_k$};
        \node[roundbox, fill=darkgrey, above left of=p_x_k, node distance=2cm] (m_w_k) {};
            \node[yshift=7mm] at (m_w_k) {$m_{w,k}$};
        \node[roundbox, fill=darkgrey, above right of=p_x_k, node distance=2cm] (v_w_k) {};
            \node[xshift=5mm, yshift=5mm] at (v_w_k) {$v_w$};
        \node[smallbox, above of=u_k, node distance=12mm] (p_u_k) {};
            \node[xshift=5mm] at (p_u_k) {$p_u$};
        \node[roundbox, fill=darkgrey, above of=p_u_k, node distance=12mm] (lambda_k) {};
            \node[xshift=6mm] at (lambda_k) {$\lambda$};
        \node[roundbox, right of=p_x_k] (x_k_plus_1) {};
            \node[xshift=5mm, yshift=6mm] at (x_k_plus_1) {$x_{k+1}$};
        \node[smallbox, dashed, below of=x_k_plus_1] (f_x_k_plus_1) {};
            \node[xshift=-8mm, yshift=1mm] at (f_x_k_plus_1) {$f_{x,k+1}$};

        \path[line] (x_t) edge[-] (p_x_k); 
        \path[line] (u_k) edge[-] (p_x_k);
        \path[line] (m_w_k) edge[-] (p_x_k); 
        \path[line] (v_w_k) edge[-] (p_x_k); 
        \path[line] (p_u_k) edge[-] (u_k); 
        \path[line] (lambda_k) edge[-] (p_u_k); 
        \path[line] (p_x_k) edge[-] (x_k_plus_1); 
        \path[dashed] (x_k_plus_1) edge[-] (f_x_k_plus_1);

        \node[right of=x_k_plus_1, node distance=2cm] (ellipsis) {$\dots$};

        \path[line] (x_k_plus_1) edge[-] (ellipsis); 

        \draw[rounded corners] (0.75,-2.0) rectangle (5.75,5.0);
        \node at (1.7,-1.7) {$_{k=t:t+T-1}$};
    \end{tikzpicture}
    \end{center}
    \caption{Augmented graphical representation of the agent model \eqref{eq:agent_model}. Circles and squares indicate variable- and factor-nodes respectively. Auxiliary factor-nodes \eqref{eq:auxiliary_node_function} are dashed, and dark circles indicate observed variables or fixed parameters. Ellipses indicate a continuation of the framed section until the lookahead time horizon.}
    \label{fig:augmented_model}
\end{figure}
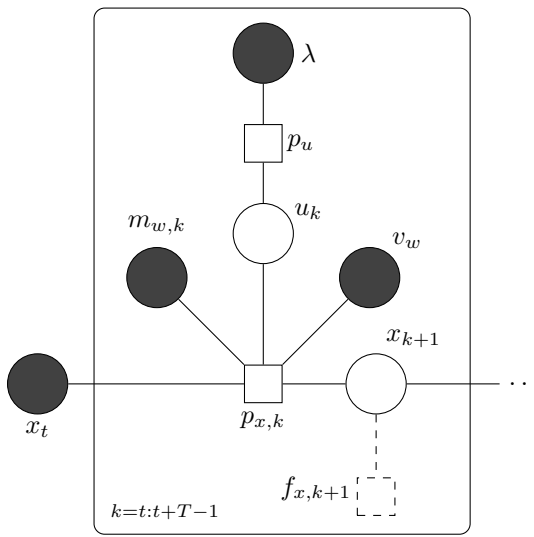

\begin{figure}[h]
    \hfill
    \begin{center}    
    \begin{tikzpicture}
        [node distance=15mm,auto,>=stealth']


        \node (ellipses_k) {\dots};
        \node[mediumbox, right of=ellipses_k, node distance=18mm] (u_add_k) {$+$};
        \node[roundbox, right of=u_add_k] (aux_k) {};
        \node[mediumbox, right of=aux_k] (m_add_k) {$+$};
        \node[roundbox, right of=m_add_k, node distance=18mm] (x_k_plus_1) {};
            \node[yshift=7mm] at (x_k_plus_1) {$x_{k+1}$};
        \node[right of=x_k_plus_1] (ellipses_k_plus_1) {\dots};

        \node[mediumbox, below of=x_k_plus_1, dashed] (f_x_k_plus_1) {};
            \node[xshift=9mm] at (f_x_k_plus_1) {$f_{x,k+1}$};
        \node[roundbox, fill=darkgrey, below of=m_add_k] (m_w_k) {};
            \node[xshift=-10mm] at (m_w_k) {$m_{w,k}$};
        \node[roundbox, above of=u_add_k] (aux_u) {};
        \node[mediumbox, above of=aux_u] (aux_n) {$\mathcal{N}$};
        \node[roundbox, fill=darkgrey, left of=aux_n] (v_w_k) {};
            \node[yshift=6mm] at (v_w_k) {$v_w$};
        \node[roundbox, above of=aux_n] (u_k) {};
            \node[xshift=-7mm] at (u_k) {$u_k$};
        \node[mediumbox, right of=u_k] (p_u_k) {$\mathcal{N}$};
        \node[roundbox, fill=darkgrey, right of=p_u_k] (lambda_k) {};
            \node[xshift=6mm] at (lambda_k) {$\lambda$};

        \path[line] (ellipses_k) edge[-] (u_add_k);
        \path[line] (u_add_k) edge[-] (aux_k);
        \path[line] (aux_k) edge[-] (m_add_k);
        \path[line] (m_add_k) edge[-] (x_k_plus_1);
        \path[line] (x_k_plus_1) edge[-] (ellipses_k_plus_1);

        \path[dashed] (x_k_plus_1) edge[-] (f_x_k_plus_1);
        \path[line] (m_add_k) edge[-] (m_w_k);
        \path[line] (aux_u) edge[-] (u_add_k);
        \path[line] (aux_n) edge[-] (aux_u);
        \path[line] (u_k) edge[-] (aux_n);
        \path[line] (p_u_k) edge[-] (u_k);
        \path[line] (lambda_k) edge[-] (p_u_k);
        \path[line] (v_w_k) edge[-] (aux_n);

        \msg{up}{right}{ellipses_k}{u_add_k}{0.3}{1}
        \msg{left}{down}{u_k}{aux_n}{0.4}{2}
        \darkmsg{left}{down}{aux_n}{aux_u}{0.5}{3}
        \msg{up}{right}{u_add_k}{aux_k}{0.5}{4}
        \msg{up}{right}{m_add_k}{x_k_plus_1}{0.45}{5}
        \msg{right}{up}{x_k_plus_1}{f_x_k_plus_1}{0.65}{6}
        \msg{up}{right}{x_k_plus_1}{ellipses_k_plus_1}{0.65}{7}

        \msg{down}{left}{x_k_plus_1}{ellipses_k_plus_1}{0.65}{A}
        \msg{left}{down}{x_k_plus_1}{f_x_k_plus_1}{0.65}{B}
        \msg{down}{left}{m_add_k}{x_k_plus_1}{0.45}{C}
        \msg{up}{left}{aux_k}{m_add_k}{0.5}{D}
        \msg{left}{up}{aux_u}{u_add_k}{0.5}{E}
        \darkmsg{right}{up}{u_k}{aux_n}{0.4}{F}
        \msg{up}{left}{u_k}{p_u_k}{0.5}{G}
        \msg{down}{left}{ellipses_k}{u_add_k}{0.3}{H}

        \draw[dashed] (0.9,-0.5) rectangle (5.3,3.5);
        \node at (4.8,3.1) {$p_{x,k}$};
    \end{tikzpicture}
    \end{center}
    \caption{Augmented agent model \eqref{eq:agent_model}, with $p_{x,k}$ expanded according to \eqref{eq:agent_state_control} (dashed rectangle), and indicated forward (numbers) and backward (letters) message passing schedules for optimization of \eqref{eq:system_bfe}. Circle and square nodes indicate variable- and factor-nodes respectively. Dark nodes indicate observed variables or fixed parameters, and auxiliary factor-nodes \eqref{eq:auxiliary_node_function} are dashed. Ellipses indicate a continuation of the model. Dark messages are computed by the variational update rule, see \citep{winn2005variational,dauwels_variational_2007}.}
    \label{fig:schedule}
\end{figure}

The schedule comprises a forward-backward scheme, as illustrated in Fig.~\ref{fig:schedule}.
Four message updates in Fig.~\ref{fig:schedule} are of particular interest. Firstly, since \eqref{eq:variational_posterior} constrains the belief over controls to a point-mass, it follows that
\begin{align*}
    \mu_{\smallcircled{2}}^{(i)}(u_k) &= \delta(u_k - a_k^{(i-1)})\,,
\end{align*}
where $i$ counts the number of schedule (forward-backward) iterations. The schedule is initialized with $a_k^{(0)} = 0$ for all $k \geq t$. Secondly, $\mu_{\smallcircled{B}}^{(i)}(x_{k+1})$ takes on the role of $\mu_{jg}(x_j)$ in Alg.~\ref{alg:chance}. Because the noise in the model is Gaussian, this message will be an (unnormalized) Gaussian as well. Therefore, by application of Alg.~\ref{alg:chance}, the third message of interest, $\mu_{\smallcircled{6}}^{(i)}(x_{k+1})$ is computed. For the initial forward pass, $\mu_{\smallcircled{B}}^{(0)}(x_{k+1}) = 1$ is considered uninformative. Fourthly, $\mu_{\smalldarkcircled{F}}^{(i)}(u_k)$ carries information upward to the control variables. Because the variational posterior is chosen to factorize between the state and control sequence \eqref{eq:variational_posterior}, the $\mu_{\smalldarkcircled{F}}^{(i)}(u_k)$ message is computed by a variational update rule as detailed in \citep{winn2005variational} and \citep{dauwels_variational_2007}.

The action for the next iteration then follows from
\begin{align*}
    q_{u,k}^{(i)}(u_k) &\propto \mu_{\smalldarkcircled{F}}^{(i)}(u_k)\, \mu_{\smallcircled{G}}^{(i)}(u_k)\\
    a_k^{(i)} &= \operatorname{mode} q_{u,k}^{(i)}(u_k)\,.
\end{align*}
Iterating the schedule then corresponds with an expectation maximization scheme. The expectation step of this scheme computes the $\mu_{\smalldarkcircled{F}}^{(i)}(u_k)$ message from the actions $a_k^{(i-1)}$. The maximization step then chooses the updated actions $a_k^{(i)}$ as the current MAP-estimate of $u_k$. The schedule is iterated until the policy converges.

Message passing simulations\footnote{Source code for the simulations is available for download at \url{http://biaslab.github.io/materials/cc_simulations.zip}} are performed with the ForneyLab probabilistic programming toolbox \citep{cox2019factor}, version 0.11.3.

\subsection{Control Law}
Note that the Bethe free energy of \eqref{eq:system_bfe} is still a function of the observed current elevation $x_t$. We can then evaluate the optimal action $a_t$ as a function of the current elevation $x_t$ (the control law), for a given wind profile, chance constraint and model parameters. In order to gain an intuition for controller behavior, we fix $m_{w,t}=0$ for all $t$. We plot the control law in Fig.~\ref{fig:control_law}, for varying values of the lookahead horizon $T$, chance constraint threshold $\epsilon$, wind variance $v_w$ and control prior precision $\lambda$.

\begin{figure}[h]
    \hfill
    \begin{center}
        \resizebox{0.8\columnwidth}{!}{\includegraphics{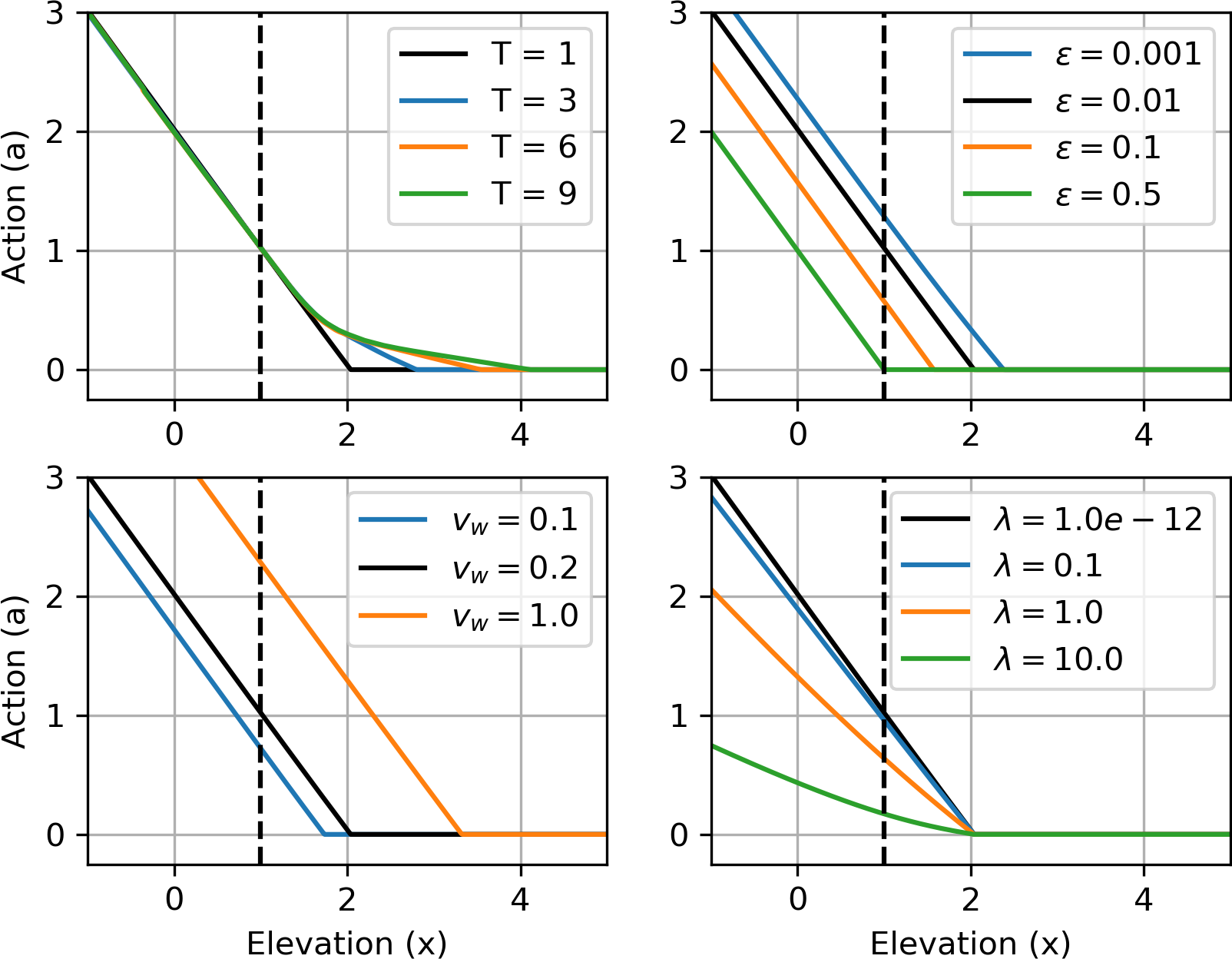}}
    \end{center}
    \caption{Slices of the control law for $m_{w,t}=0, \mathcal{S}=(1, \infty)$, varied around reference setting $T=1, \epsilon=0.01, v_w=0.2, \lambda=10^{-12}$ (black curves). Dashed vertical lines indicate the minimal safe elevation.}
    \label{fig:control_law}
\end{figure}

The top-left diagram shows that with growing lookahead horizon $T$, the agent starts intervening at higher elevation. With this anticipatory effect the agent prepares for events in the more distant future. The top-right diagram also shows that the agent intervenes at higher elevation with decreasing $\epsilon$. When violation of the constraint grows less desirable, the agent must intervene earlier in order to assure that sufficient probability mass is present in the safe region. Also note that no further action is proposed beyond an intervention threshold. Once the agent is sufficiently elevated, no corrections are proposed until the agent wanders (or is forced) below the intervention threshold. The bottom-left figure shows a similar effect for growing wind velocity variance $v_w$. When the system grows more stochastic, chance constraint abidance is ensured by intervening at higher elevations. Finally, the bottom-right figure illustrates what happens when the chance constraint is combined with a Gaussian prior constraint on control. Increasing the control prior precision $\lambda$ penalizes immediate correction. For low precisions (low penalty on control magnitude), the slope of the control law below the intervention threshold is equal to $1$, and compensation is immediate. Control grows more robust with growing precision, at the cost of prolonged chance constraint violation.

\subsection{Comparison Against a Goal-Driven Agent}
In order to illustrate the difference in behavior between a chance- and a goal-driven ActInf agent, we compare the results of Fig.~\ref{fig:control_law} with an ActInf agent where the chance constraint is replaced by a goal prior. We use the graphical model definition of Fig.~\ref{fig:augmented_model} and define the auxiliary node function as a fixed prior $f_{x, k+1}(x_{k+1}) = \N{x_{k+1} | m_x, \vartheta_x}$ for all $t \leq k \leq t+T-1$. We choose $m_x=2$, and the variance $\vartheta_x=0.18478$ such that the overflow of the safe region $1 - \int_{\mathcal{S}} f_{x, k+1}(x_{k+1}) \d{x_{k+1}} \approx 0.01$ resembles the situation for $\epsilon=0.01$. The message passing schedule then follows the definition of Fig.~\ref{fig:schedule}, where $\mu_{\smallcircled{6}}^{(i)}(x_{k+1})$ is no longer computed by Alg.~\ref{alg:chance} and propagates the fixed goal prior instead. Fig.~\ref{fig:control_law_reference} shows the resulting control law for $m_{w,t}=0, T=1, v_w=0.2$ and varying $\lambda$.

The results of Fig.~\ref{fig:control_law_reference} show that the control for the goal-driven agent grows more robust with increasing $\lambda$ -- similar to the control law for the chance-driven agent (Fig.~\ref{fig:control_law}, bottom right). For the smallest $\lambda$, the control law for the prior-driven agent resembles the corresponding control law for the chance-driven agent (dotted curve) only for elevations $x < 2$. For elevations $x > 2$, the goal-driven agent proposes downward corrections, while the chance-driven agent proposes no corrections. This comparison illustrates how a chance-driven agent avoids unnecessary interventions.

\begin{figure}[h]
    \hfill
    \begin{center}
        \resizebox{0.5\columnwidth}{!}{\includegraphics{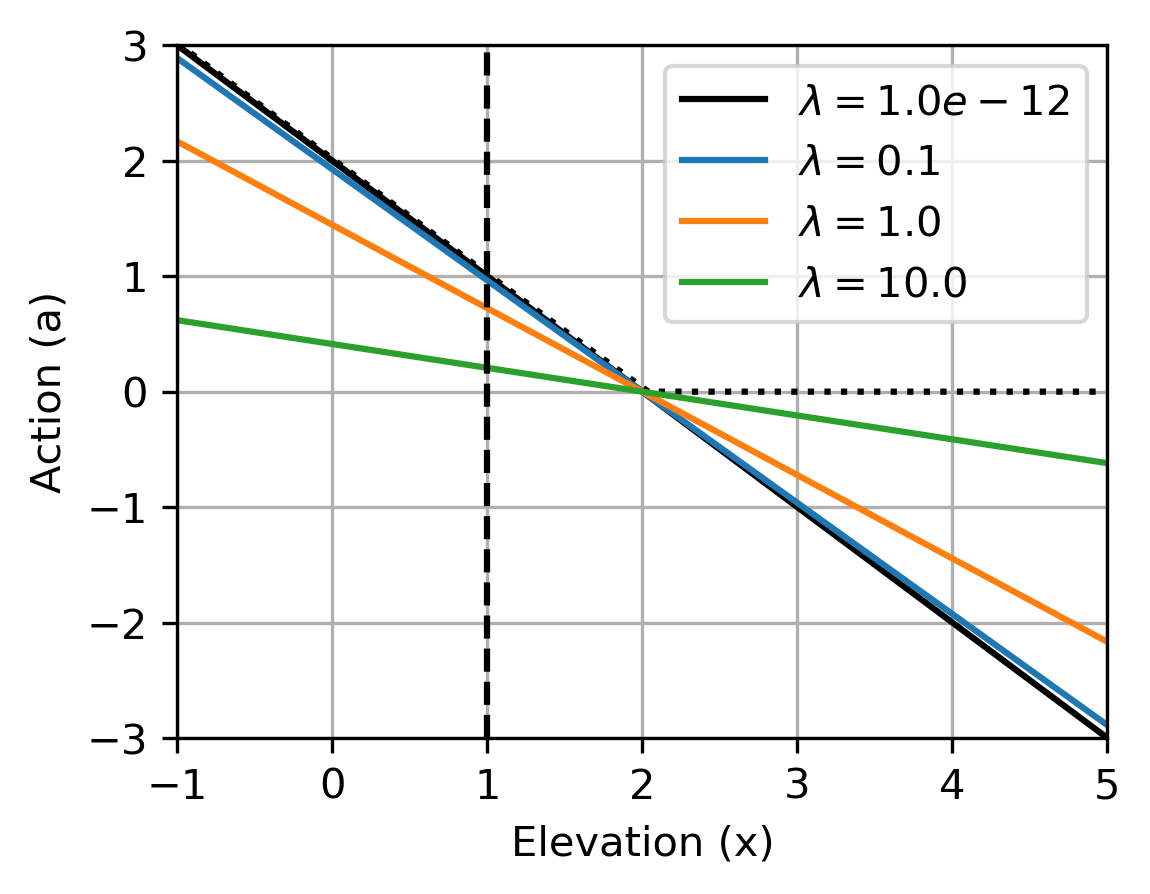}}
    \end{center}
    \caption{Slices of the control law for a goal-driven agent with $m_{w,t}=0, T=1, m_x=2, \vartheta_x=0.18478, v_w=0.2$ with varying $\lambda$. The dashed vertical line indicates the minimal safe elevation. The black dotted curve represents the reference result ($\lambda=10^{-12}$) for the chance-driven agent (Fig.~\ref{fig:control_law}, black curves).}
    \label{fig:control_law_reference}
\end{figure}

\subsection{Simulation Results}
In this section we study an active inference agent in interaction with a simulated environment. The action-perception loop is based on \citep{van_de_laar_simulating_2019} and consists of four steps at every time $t$:
\begin{enumerate}
\item \emph{Observe} the current agent elevation;
\item \emph{Infer} a policy from the current elevation and the future expected wind velocities by chance-constrained message passing;
\item \emph{Act} by selecting the first (current) action from the inferred policy;
\item \emph{Execute} the selected action in the system and advance the time index by one.
\end{enumerate}

\begin{figure}[h]
    \hfill
    \makebox[\textwidth][c]{
    \begin{subfigure}[t]{0.65\textwidth}
        \centering
        \includegraphics[width=\textwidth]{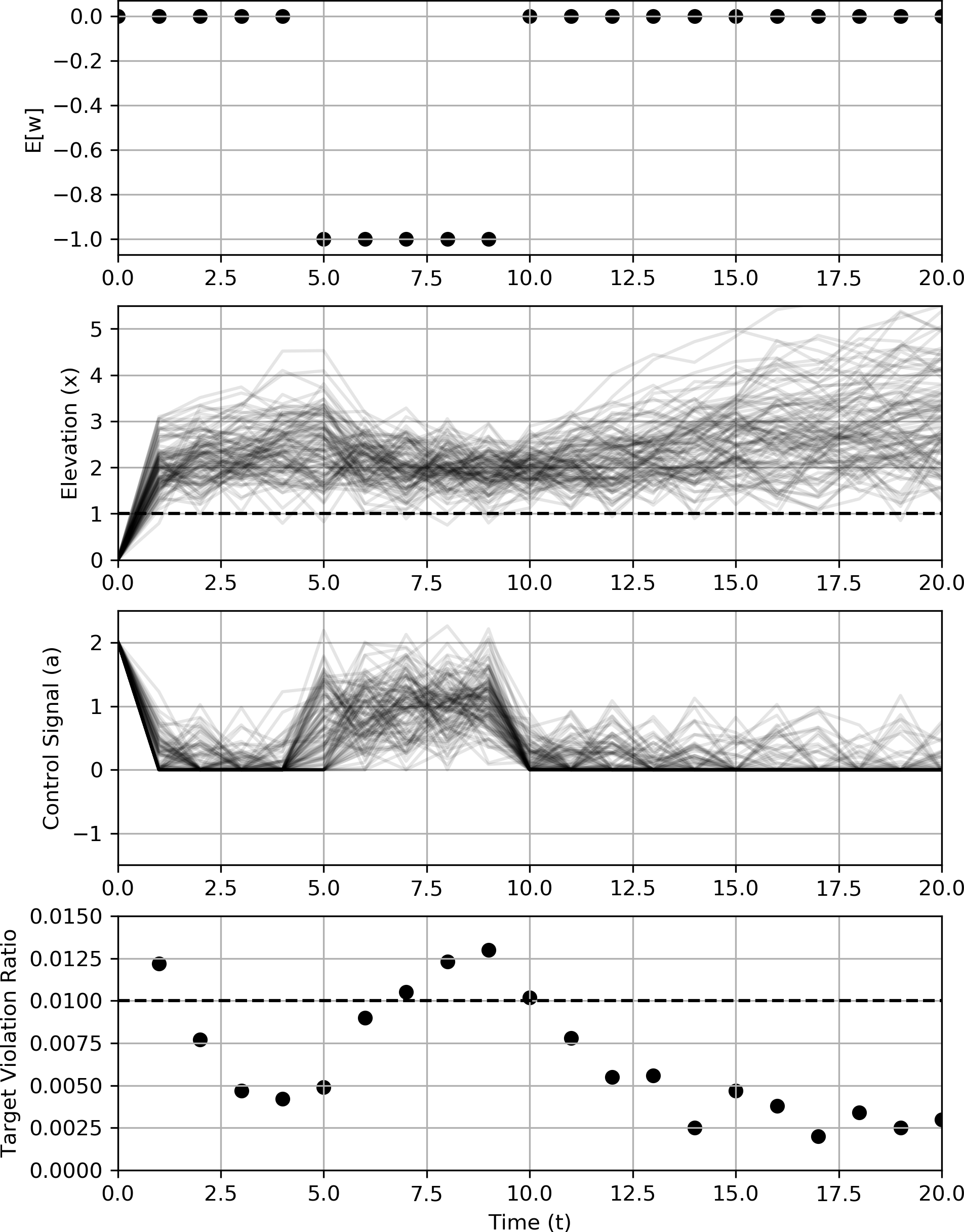}
    \end{subfigure}
    \begin{subfigure}[t]{0.65\textwidth}
        \centering
        \includegraphics[width=\textwidth]{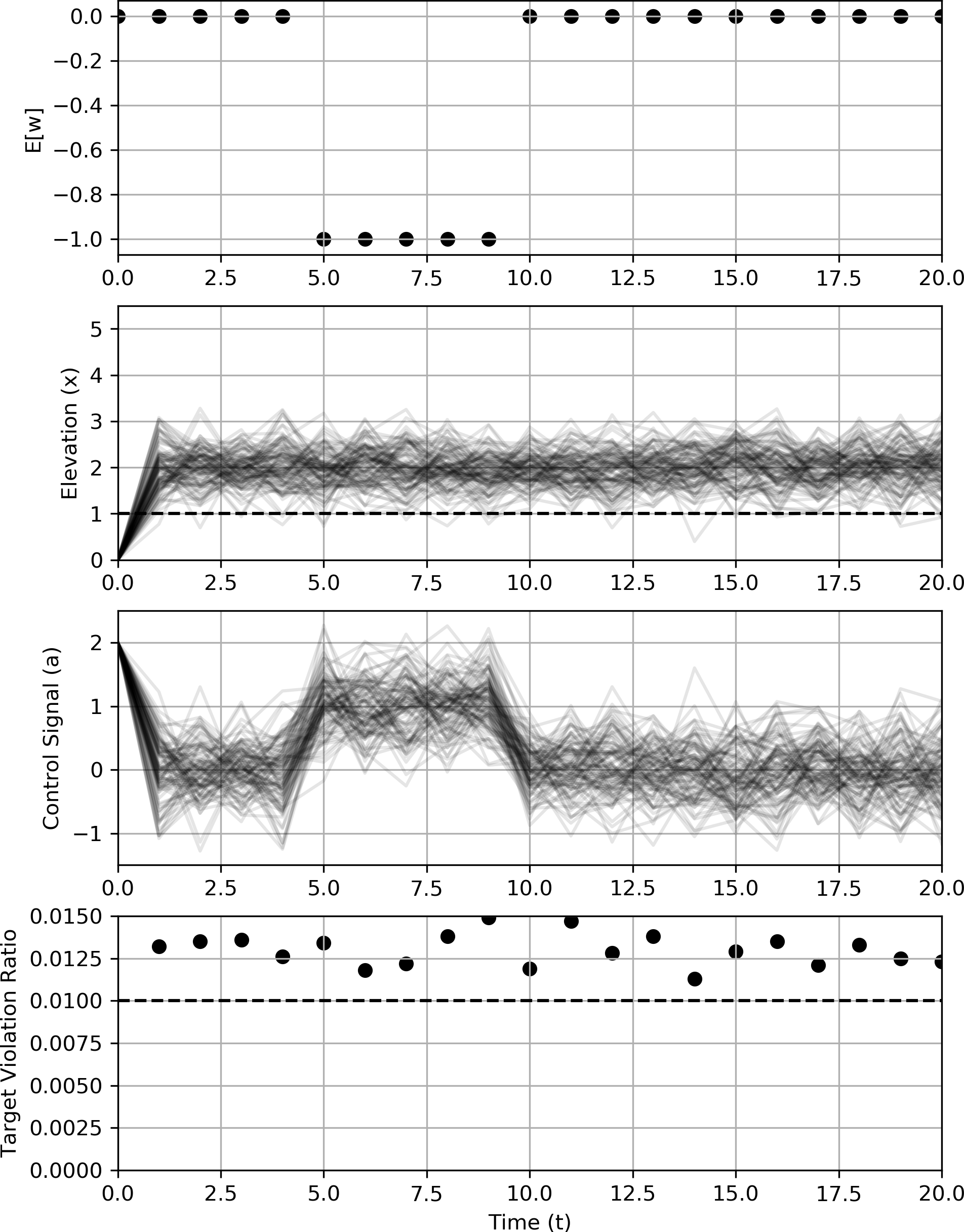}
    \end{subfigure}}
    \caption{Results for ten thousand simulations with varying wind strength over time, and $T=1, v_w=0.2, \lambda=10^{-12}$, for a chance-driven agent ($\epsilon=0.01$, left), and a goal-driven agent ($m_x=2, \vartheta_x=0.18478$, right).}
    \label{fig:simulation}
\end{figure}

The results for ten thousand independent runs are plotted in Fig.~\ref{fig:simulation} for a chance-driven agent (left) and a goal-driven agent (right). The first row of diagrams plots the expected wind velocity over time, which is identical for each run. The sampled wind velocity trajectories $w_t$ do vary per run, under influence of the wind velocity variance $v_w$. For $5 \leq t < 10$ a downward draft attempts to push the drone below the minimal safe elevation (dashed). The second row plots the drone elevation trajectory for a randomly selected subset of runs. Corresponding actions are plotted in the third row. The fourth row evaluates the relative number of runs that violate the safe-zone over time.

It can be seen that both agents undertake corrective actions in order to compensate for the downward wind. However, while the chance-driven agent (left) only proposes upward corrections below the intervention threshold, the goal-driven agent (right) proposes additional downward corrections above the threshold. Furthermore, it can be seen that the maximal empirical violation for the chance-constrained agent mostly remains below the chance constraint target violation probability of $\epsilon=0.01$ (dashed), while the goal-driven agent systematically overshoots the target violation probability, i.e. violates the chance constraint. Compared to the chance-driven agent, the maximal empirical violations for the goal-driven agent are also larger. This effect can be explained in terms of the constrained beliefs. Namely, the chance-driven agent constrains the posterior beliefs, while the goal-driven agent imposes prior constraints on the model. Prior constraints may still be violated by the corresponding posterior beliefs, leading to more pronounced empirical violations.

\section{Conclusions}
\label{sec:conclusion}
In this paper, we formulated chance-constrained optimization of the Bethe free energy in terms of message passing on a factor graph. We showed that, in the factor graph representation of the generative model, chance constraints can be imposed by auxiliary factors that force (a specified portion of) the probability mass of the chance-constrained beliefs inside a designated safe-zone. Message passing on the augmented graph, with the auxiliary factor-nodes included in the graph, then automatically balances the imposed chance constraints with additional (prior) constraints on the generative model. Chance constraints can thus be interpreted as modular click-on extensions to the generative model, similar to conventional factor-nodes \citep{loeliger2004signal}, and can thus be used to complement message-passing formulations on generative neural models \citep{friston2017graphical,van2018forneylab}.

However, because the analytical result for the chance-constrained update includes an inherent discontinuity, direct application of this rule may still lead to message updates that grow prohibitively complex. To remedy this, we proposed an algorithm that approximates the resulting message with a Gaussian form. This algorithm offers a tractable formulation of chance-constrained message passing. The proposed message passing interpretation of chance constraints then vastly enhances the modularity and flexibility of chance-constrained inference, and can accelerate the search for workable models \citep{blei2014build}.

We demonstrated chance-constrained message passing in the context of active inference. We compared the simulated behavior of a chance-driven agent with a goal-driven agent, where the chance constraints are replaced by traditional prior beliefs on future outcomes. The results illustrate how the goal-driven agent continually proposes corrections, whereas the chance-driven agent seizes interventions above a threshold. Chance-constrained ActInf may thus avoid unnecessary interventions and reduce the cost of control.

The results for the chance-driven agent showed that, in the absence of additional prior constraints, the empirical chance constraint violation ratio mostly remains below the pre-set target violation probability. An added prior constraint on controls robustifies control at the cost of prolonged chance constraint violation. Chance-constrained active inference thus weights all imposed constraints on the generative model, allowing e.g., for a trade-off between robust control and empirical chance constraint violation.

\subsection*{Acknowledgments}
This work was supported, in part, by GN Hearing A/S and the Swedish Research Council (under Grants 2015-04011 and 2018-03701).

\appendix
\section*{Appendix}
\section{Calculus of Variations}
\label{app:calculus_of_variations}
The calculus of variations offers a principled method for optimizing functionals (a function of a function that returns a scalar). We follow \citep{engel2013density} and consider the impact of a variation in a function $q(x), x \in \mathcal{X}$, on a functional $L[q]$. We define an infinitesimal variation of $q$ by
\begin{align*}
    \delta q \overset{\Delta}{=} \beta \phi\,, 
\end{align*}
where $\beta \rightarrow 0$, and $\phi(x)$ is a continuous and differentiable ``test'' function.

The functional derivative $\delta L/\delta q$ relates a variation in $q$ to a change in $L$, by \citep{parr1980density}:
\begin{align}
    \frac{\d{L[q + \beta \phi]}}{\d{\beta}}\bigg|_{\beta=0} &= \int \frac{\delta L}{\delta q}(x)\, \phi(x) \d{x}\,. \label{eq:first_variation}
\end{align}
The procedure then becomes to apply the operations on the l.h.s. to $L$, and bring it into the form of the r.h.s., which allows us to identify the functional derivative $\delta L/\delta q$. The stationary points $q^*$ are then obtained by setting $\delta L/\delta q \overset{!}{=} 0$ and solving for $q$.

\section{Proofs}
\label{app:proofs}

\subsection{Proof of Lemma \ref{lem:q_b_star}}\label{pf:lem:q_b_star}
Application of \eqref{eq:first_variation} to \eqref{eq:L_q} as a functional of $q_b$, yields
\begin{align*}
    &\frac{\d{L[q_b + \beta \phi_b]}}{\d{\beta}}\bigg|_{\beta=0} = \int \phi_b(x_b)\bigg[\log \frac{q_b(x_b)}{f_b(x_b)} + 1 + \gamma_b - \sum_{i\in \mathcal{V}(b)} \zeta_{ib}(x_i)\bigg] \d{x_b}\,.
\end{align*}
Identifying the functional derivative $\delta L[q_b]/\delta q_b$ and setting it to zero, we obtain
\begin{align}
    q_b^*(x_b) &= f_b(x_b)\exp\!\bigg[\sum_{i\in \mathcal{V}(b)} \zeta_{ib}(x_i) - \gamma_b - 1\bigg]\,.
\end{align}
We now define $\mu(x_j) = \exp \zeta(x_j)$ and apply the normalization constraint, which recovers \eqref{eq:q_b_star_result}. \hfill \qedsymbol

\subsection{Proof of Lemma \ref{lem:q_j_star}}\label{pf:lem:q_j_star}
Application of \eqref{eq:first_variation} to \eqref{eq:L_q} as a functional of $q_j$, yields
\begin{align*}
    &\frac{\d{L[q_j + \beta \phi_j]}}{\d{\beta}}\bigg|_{\beta=0} = \int \phi_j(x_j)\bigg[ -(d_j - 1) + \gamma_j\\
    & - (d_j - 1)\log q_j(x_j) + \sum_{a\in \mathcal{F}(j)} \zeta_{ja}(x_j) + \eta_j g_j(x_j)\bigg] \d{x_j}\,.
\end{align*}
Identifying the functional derivative $\delta L[q_j]/\delta q_j$ and setting it to zero, yields
\begin{align}
    q_j^*(x_j) &= \exp\!\bigg[\frac{1}{d_j-1}\bigg(1 - d_j + \gamma_j + \sum_{a\in \mathcal{F}(j)} \zeta_{ja}(x_j) + \eta_j g_j(x_j)\bigg)\bigg]\,, \label{eq:q_j_star_var}
\end{align}
which is the first expression for $q_j^*$.

We can obtain a second expression for $q_j^*$ by applying the marginalization constraint to the result of Lemma \ref{lem:q_b_star}. Substituting \eqref{eq:q_b_star_result} in \eqref{eq:marg},
\begin{align}
    q_j^*(x_j) &= \int q_b^*(x_b) \d{x_{b\setminus j}}\notag \\
    &= \frac{1}{Z_b}\mu_{jb}(x_j) \overbrace{\int f_b(x_b) \prod_{\substack{i\in \mathcal{V}(b)\\i\neq j}}\mu_{ib}(x_i) \d{x_{b\setminus j}}}^{\mu_{bj}(x_j)}\,, \label{eq:q_j_star_marg}
\end{align}
where we identified a new quantity $\mu_{bj}(x_j)$ (note the reverse indexing).

Interestingly, the marginalization result of \eqref{eq:q_j_star_marg} not only holds for the specific factor $b$, but for all factors that neighbor $j$. Therefore, by symmetry, we can iterate the relation of \eqref{eq:q_j_star_marg} for all $c \in \mathcal{F}(j)$:
\begin{align*}
    \prod_{c\in \mathcal{F}(j)} q_j^*(x_j) &= \prod_{c\in \mathcal{F}(j)} \frac{1}{Z_c}\mu_{jc}(x_j) \mu_{cj}(x_j)\,.        
\end{align*}
We choose to exclude $b$ itself from the iteration on both sides, and obtain
\begin{align}
    \prod_{\substack{c\in \mathcal{F}(j)\\c\neq b}} q_j^*(x_j) &= \prod_{\substack{c\in \mathcal{F}(j)\\c\neq b}} \frac{1}{Z_c} \mu_{jc}(x_j) \mu_{cj}(x_j)\,. \label{eq:prod_d_j_min_one}
\end{align}

We substitute \eqref{eq:q_j_star_var} in the l.h.s. of \eqref{eq:prod_d_j_min_one}, and note that the product on the l.h.s. now has $d_j-1$ terms, and that neither of these terms depend on $c$. This allows us to remove the $d_j-1$ terms from the exponent of \eqref{eq:q_j_star_var}, which yields
\begin{align*}
    &\exp\!\left(1 - d_j + \gamma_j + \eta_j g_j(x_j)\right)\!\! \prod_{a\in \mathcal{F}(j)} \mu_{ja}(x_j) = \prod_{\substack{c\in \mathcal{F}(j)\\c\neq b}} \frac{1}{Z_c} \mu_{jc}(x_j) \mu_{cj}(x_j)\,.
\end{align*}

Canceling duplicate terms and simplifying, we obtain an expression for $\mu_{jb}$ as identified in \eqref{eq:q_j_star_marg}:
\begin{align}
    \mu_{jb}(x_j) &\propto \exp\!\left(-\eta_j g_j(x_j)\right) \prod_{\substack{a\in \mathcal{F}(j)\\a\neq b}} \mu_{aj}(x_j)\,. \label{eq:backward_message}
\end{align}

Finally, substituting \eqref{eq:backward_message} back in \eqref{eq:q_j_star_marg} and re-normalizing, we recover \eqref{eq:q_j_star_result}.
\hfill \qedsymbol

\subsection{Proof of Theorem \ref{thm:corrected_belief}}\label{pf:thm:corrected_belief}
We start from \eqref{eq:q_j_star_result}, and use the definitions of \eqref{eq:q_phi_0} to obtain
\begin{align*}
    \int_{\mathcal{S}_j} q_j^*(x_j; \eta_j) \d{x_j} = \frac{\Phi^{(0)}_j\exp(-\eta_j)}{\Phi^{(0)}_j\exp(-\eta_j) - \Phi^{(0)}_j + 1}\,,
\end{align*}
which leads to
\begin{align*}
    \exp(-\eta_j^*) = \frac{(1 - \epsilon)(1 - \Phi^{(0)}_j)}{\epsilon \Phi^{(0)}_j}\,.
\end{align*}
We have now identified the $\eta_j$ multiplier. Substituting this result back in \eqref{eq:q_j_star_result} recovers \eqref{eq:optimal_correction}, which expresses the \emph{corrected} belief in terms of the \emph{uncorrected} belief.
\hfill \qedsymbol

\subsection{Proof of Theorem \ref{thm:message_passing}}\label{pf:thm:message_passing}
From \eqref{eq:q_j_star_result}, we express the uncorrected belief in terms of the messages
\begin{align}
    q_j^{(0)}(x_j) &= \frac{1}{Z_j^{(0)}} \prod_{a\in \mathcal{F}(j)} \mu_{aj}(x_j)\,, \label{eq:q_j_star_0}
\end{align}
with $Z_j^{(0)} = Z_j(\eta_j=0)$.

We now construct the augmented graph $\mathcal{G}' = (\mathcal{F}', \mathcal{V}, \mathcal{E}')$ according to \eqref{eq:augmented_graph}, and define a message
\begin{align}
    \mu_{gj}(x_j) = f_g(x_j)\,, \label{eq:factor_update_corrected}
\end{align}
with $f_g(x_j)$ as defined by \eqref{eq:auxiliary_node_function}.

Substituting \eqref{eq:q_j_star_0} and \eqref{eq:factor_update_corrected} in \eqref{eq:optimal_correction} then yields the corrected belief in terms of the messages
\begin{align}
    q_j^*(x_j; \eta_j=\eta_j^*) = \frac{1}{Z_j^{(0)}} \mu_{gj}(x_j) \mu_{jg}(x_j)\,, \label{eq:q_j_corrected}
\end{align}
with
\begin{align}
    \mu_{jg}(x_j) = \prod_{\substack{a\in \mathcal{F}'(j)\\a\neq g}} \mu_{aj}(x_j)\,. \label{eq:variable_update_corrected}
\end{align}

The results of \eqref{eq:factor_update_corrected}, \eqref{eq:q_j_corrected} and \eqref{eq:variable_update_corrected} can be interpreted as belief propagation \eqref{eq:bp_messages}, \eqref{eq:bp_q_j} on the augmented graph $\mathcal{G}'$.
\hfill \qedsymbol

\bibliographystyle{apalike}
\bibliography{bibliography}

\end{document}